\documentclass[twoside,leqno,twocolumn]{article}
\usepackage{amssymb}
\newcommand{\R}{\mathbb{R}}

\usepackage[letterpaper]{geometry}

\usepackage{siamproceedings}

\usepackage[T1]{fontenc}
\usepackage{amsfonts}
\usepackage{graphicx}
\usepackage{epstopdf}
\usepackage{enumitem}
\usepackage{algorithmic}
\ifpdf
  \DeclareGraphicsExtensions{.eps,.pdf,.png,.jpg}
\else
  \DeclareGraphicsExtensions{.eps}
\fi

\newsiamremark{remark}{Remark}
\newsiamremark{hypothesis}{Hypothesis}
\crefname{hypothesis}{Hypothesis}{Hypotheses}
\newsiamthm{claim}{Claim}

\usepackage{amsopn}

\newcommand{\best}[1]{\textbf{#1}}

\def\D{\mathcal{D}}
\def\B{\mathcal{B}}
\def\S{\mathcal{S}}
\def\Q{\mathcal{Q}}

\newcommand{\our}{SeBA}
\newcommand{\full}{Separated-at-Birth Alignment}

\usepackage{subcaption}
\usepackage{amsfonts}
\usepackage{amssymb}
\usepackage{booktabs}
\usepackage{multirow} 
\usepackage{rotating}

\usepackage{array}
\usepackage{siunitx}
\begin{document}

\newcommand\relatedversion{}

\title{\Large \our{}: Semi-supervised few-shot learning via\\ \full{} for tabular data\relatedversion}

\author{ Kacper Jurek\thanks{Equal contribution.} \thanks{Jagiellonian University, Faculty of Mathematics and Computer Science} \thanks{Jagiellonian University, Doctoral School of Exact and Natural Sciences} \and Wojciech Batko\footnotemark[1] \footnotemark[2] \and Marek Śmieja\footnotemark[2] \and Marcin Przewięźlikowski\footnotemark[2] \thanks{NASK National Research Institute} \thanks{Corresponding author: \email{marcin.przewiezlikowski@uj.edu.pl}.} }

\date{}

\maketitle

\fancyfoot[R]{\scriptsize{Copyright \textcopyright\ 2026 by SIAM\\
Unauthorized reproduction of this article is prohibited}}

\begin{abstract}

Learning from scarce labeled data with a larger pool of unlabeled samples, known as semi-supervised few-shot learning (SS-FSL), remains critical for applications involving tabular data in domains like medicine, finance, and science.
The existing SS-FSL methods often rely on self-supervised learning (SSL) frameworks developed for vision or language, which assume the availability of a natural form of data augmentations. For tabular data, defining meaningful augmentations is non-trivial and can easily distort semantics, limiting the effectiveness of conventional SSL. In this work, we rethink SSL for tabular data and propose \full{} (\our{}), a joint-embedding framework for SS-FSL that eliminates the dependence on augmentations. 
Our core idea is to separate the data into two independent, but complementary views and align the representations of one view to mirror the nearest-neighbor correspondence of the data in the second view. 
Our experimental evaluation supported by a theoretical analysis justifies that \our{} generates an output space, which improves the feature-label relationship.
An experimental study conducted in various benchmark datasets demonstrates that \our{} achieves the state-of-the-art performance in the majority of cases, opening a new avenue for SS-FSL paradigm in the domain of tabular data. Code is available at \url{https://github.com/kacper3615/SeBA}

\end{abstract}

\section{Introduction}

Learning with a limited amount of labeled data remains a fundamental challenge in machine learning and data analysis. 
Although collecting additional annotations is costly, access to raw unlabeled data is often inexpensive.
This imbalance motivates the practical setting of semi-supervised few-shot learning (SS-FSL), where classification must be performed with scarce labeled data and a large pool of unlabeled samples (see~\Cref{fig:setting_ssfsl}).
Applications in disease diagnosis~\cite{shailaja2018machine}, credit risk prediction~\cite{clements2020sequentialdeeplearningcredit}, and cognitive sciences~\cite{grabowska2025individual} highlight the need for approaches tailored to tabular data. Despite its importance, tabular SS-FSL has been underexplored compared to FSL methods for computer vision (CV) and natural language processing (NLP)~\cite{finn2017model,snell2017prototypical,sendera2023hypershot,przewikezlikowski2022hypermaml,gpt,min2022noisychannellanguagemodel}.

The tabular modality poses unique challenges for typical SS-FSL methods, which rely on pretraining with unlabeled data followed by fine-tuning on a few labeled examples (see \Cref{fig:problem}).
State-of-the-art pretraining approaches often use self-supervised learning (SSL), which encourages models to produce similar representations for semantically related \emph{positive pairs} while avoiding collapse to trivial solutions~\cite{wang2020understanding}.
Such pairs are usually created by sampling multiple augmentations of the same data point.
In CV, augmentations are straightforward: image transformations such as cropping, rotation, or color jittering yield valid semantically consistent samples.
However, for tabular data, there is no general way to define proper augmentations.
Poorly chosen transformations, such as zero masking, Gaussian noise, or sampling features from marginal distribution, can distort semantics or even generate out-of-distribution samples (see~\Cref{fig:augmentation_problem}), ultimately undermining the effectiveness of SSL.
Consequently, recent work has turned to alternative pretraining strategies for tabular representation learning --  such as cluster detection with prototypical networks~\cite{nam2023stunt}, or diffusion-based methods~\cite{liu2024d2r2} -- largely sidelining SSL.

\begin{figure*}[t]
    \centering
    \begin{subfigure}[t]{0.31\textwidth}
        \includegraphics[width=\linewidth]{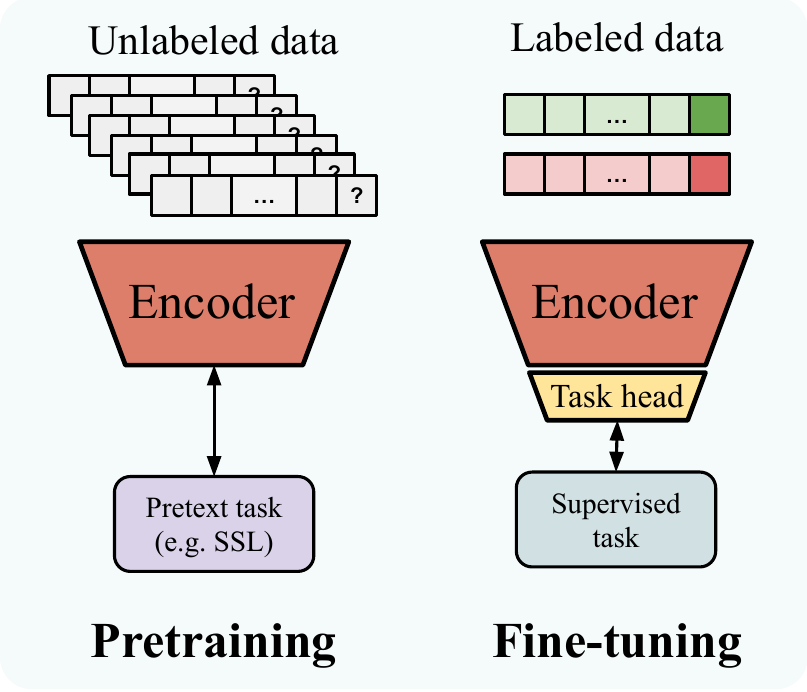}
        \caption{Semi-supervised Few-shot Learning (SS-FSL) setup. The model is pretrained on a large pool of unlabeled data and next fine-tuned on a few labeled examples.}
        \label{fig:setting_ssfsl}
    \end{subfigure}
    \hfill
    \begin{subfigure}[t]{0.66\textwidth}
    \includegraphics[width=\linewidth]{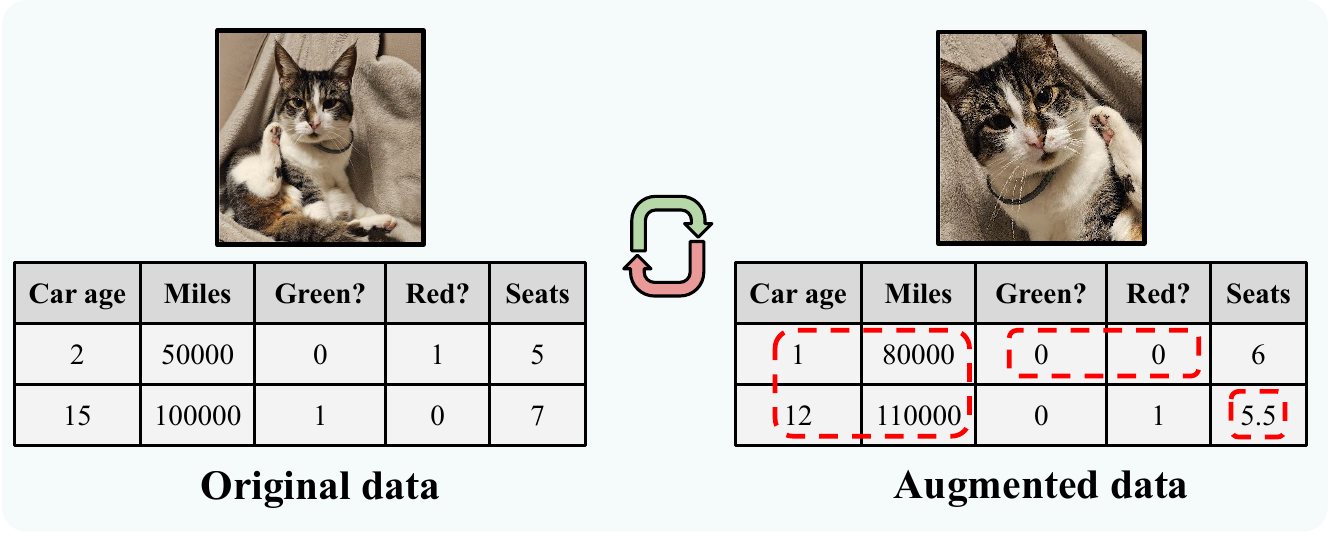}
        \caption{While semantic-preserving augmentations are straightforward to define for modalities such as images, they must be designed much more carefully for tabular data. Improperly designed augmentations can generate samples from outside the data manifold (decreasing car age, but increasing mileage), obfuscate the categorical values (neither option marked as true), or assign incorrect values (number of car seats must be an integer).}
        \label{fig:augmentation_problem}
    \end{subfigure}
    \caption{
    Typical Semi-supervised Few-shot Learning (SS-FSL) approaches \textbf{(a)} pretrain their representations on large pools of unlabeled data, usually via Self-supervised Learning (SSL).
    In the case of tabular data, the state-of-the-art augmentation-based SSL approaches cannot be directly applied, due to challenges with defining proper augmentations \textbf{(b)}. 
    In this work, we introduce \full{}, which removes the need for augmentations and improves the tabular representations for SS-FSL (see~\Cref{fig:seba_architecture}).
    }
    \label{fig:problem}
\end{figure*}

In this paper, we rethink SSL for tabular data and show that, with carefully designed positive pairs, it yields significantly stronger tabular representations than previously assumed.
Instead of aligning the representations of positive pairs created via augmentations, we introduce \full{} (\our{}), illustrated in \Cref{fig:seba_architecture}. 
\our{} projects data into two complementary 
subspaces: feature and target views. 
The model is then pretrained by identifying nearest-neighbor correspondences in the target view, using only the information encoded in the feature view.
This replaces the problematic reliance on augmentations with a nearest-neighbor graph.
To properly handle mixed data types, we employ a type-aware separation scheme that accounts for both categorical and numerical features, ensuring that the resulting projections remain semantically meaningful.

\our{} requires far less dataset-specific knowledge than hand-crafting augmentations, making it practical and easy to apply.
Moreover, the model pretrained by \our{} is lightweight and thus less prone to overfitting for small datasets.
The experimental results clearly show that \our{} generalizes effectively in a wide variety of tabular datasets, achieving impressive results in few-shot classification, especially on the challenging, high-dimensional benchmarks.

\textbf{Our contributions can be summarized as follows:}

\begin{itemize}%
    \item We introduce \full{} (\our{}), a novel self-supervised pretraining approach for tabular Few-shot Learning that replaces augmentation-based positive pair construction with a nearest-neighbor graph, removing the need for hand-crafted augmentations.
    
    \item We provide an in-depth empirical evaluation that confirms that our pretraining generates an output space, which improves the feature-label relationship, see \Cref{fig:10nn_consistency}. This is supported by a theoretical analysis for Gaussian data, which justifies that \our{} produces meaningful positive pairs. %

    \item We experimentally verify that \our{} generalizes across a wide variety of tabular datasets, achieving strong performance in few-shot classification, see~\Cref{sec:main_results}.
\end{itemize}

\section{Related work}

\paragraph{Self-supervised learning for tabular data}

One of the first approaches to self-supervised learning for tabular data, VIME \cite{yoon2020vime}, creates a novel pretext task of estimating mask vectors from corrupted tabular data in addition to the reconstruction pretext task for self-supervised learning. \cite{bahri2021scarf} propose SCARF, a contrastive learning method in which different views of a sample are obtained by corrupting a random subset of features. In both methods, data corruption is implemented as sampling from empirical distribution on masked features. \cite{ucar2021subtab} propose SubTab that divides the input features into multiple subsets to perform a pretext task close to mask reconstruction and contrastive learning simultaneously. \cite{sui2023self} propose an augmentation-free method that simultaneously reconstructs multiple randomly generated data projection functions to generate a data representation. T-JEPA \cite{thimoniert} is also an augmentation-free method that is trained by predicting the latent representation of a subset of features from the latent representation of a different subset within the same sample. PTaRL has investigated prototype-based representation learning for tabular data \cite{ye2024ptarl}, which first constructs a prototype-based projection space (P-Space) and then learns the disentangled representation around the global data prototypes. 
Most of these methods are applied as pretraining techniques, where several deep classifiers are trained on the obtained representation, which is not well suited to SS-FSL problem considered here.

\paragraph{Few-shot learning for tabular data}

Some recent research claims that SSL methods cannot create representations for tabular data that can be fine-tuned to target tasks with a limited number of labels \cite{nam2023stunt, liu2024d2r2}. As a remedy, \cite{nam2023stunt} propose STUNT, a meta-learning method that pretrains a ProtoNet on self-generated tasks using only unlabeled data. The authors show that given only a few labeled samples per class, STUNT outperforms other methods. \cite{liu2024d2r2} propose D2R2 to extract representations by training a conditional diffusion model and aligning the distances from various random projection spaces.
Although D2R2 reports the SOTA results, they are obtained in a transductive setting, in which the model is tuned according to the information coming from the test set. We are the first who show that inferior performance of SSL methods in a few-shot setting is not inherent to tabular data. The proposed construction of \our{} allows us to construct a representation that is state-of-the-art in tabular few-shot learning.

\paragraph{Transfer learning and other pretraining techniques}

Other works, such as XTab \cite{zhu2023xtab}, TransTab \cite{wang2022transtab}, or UniTabE \cite{yang2023unitabe}, propose a cross-table pretraining approach for building representation. A seminal work of \cite{hollmanntabpfn} introducing TabPFN demonstrated how to pretrain a transformer-based architecture on synthetically generated data. Once the model is pretrained, it can be applied in a zero-shot setting to new data or further fine-tuned on target task \cite{breejen2024fine}. 
In contrast to the transformer-based methods mentioned above, \our{} uses a lightweight MLP and a single linear layer as the task-specific head, which can be fine-tuned effectively using only a few labeled samples.%

\section{Method}

\begin{figure*}[t]
    \centering
    \includegraphics[width=0.8\linewidth]{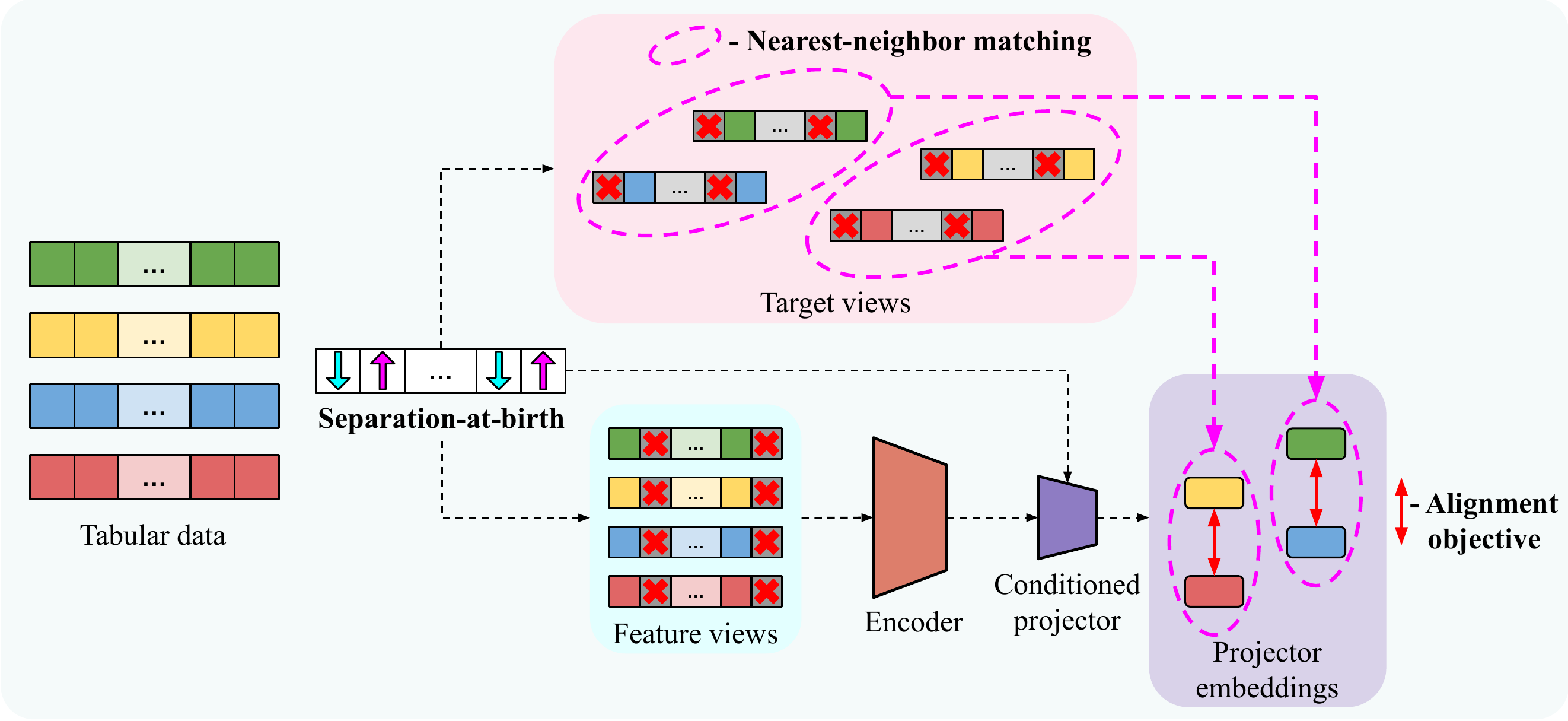}
    \caption{
    Representation learning via \full{} (\our{}).
    In each minibatch, we separate the columns of tabular data "at-birth" into two complementary and independent subsets, which define target and feature views.
    Instead of augmentation, semantically-related positive pairs for a pretraining contrastive task are defined using the nearest neighbor relation in their target view (upper side). The encoder is trained to create the representation of the feature view, which aligns the positive pairs defined in the target view (bottom side). To allow the encoder to create general data representation, \our{} uses a conditioned projector to build a task-specific representation for every separation mask.
    }
    \label{fig:seba_architecture}
\end{figure*}

\paragraph{Problem statement and scope.} We consider a semi-supervised few-shot learning problem (SS-FSL), illustrated in~\Cref{fig:setting_ssfsl}. 
In the training stage, we have access to a small labeled support set $\S = \{(s_{i},y_{i})\}_{i=1}^{N_s}$ and a large portion of unlabeled data $\D=\{(u_i)\}_{i=1}^{N_u}$, where $s_i, u_i \in \R^D$ are feature vectors, $y_i \in \{1,2,\ldots,N\}$ denotes the class label and $N$ is the number of classes. We assume that $N_u \gg N_s$. Furthermore, in the $N$-way $K$-shot setting, each of the $N$ classes in the support set is represented by exactly $K$ labeled samples.
Our goal is to train a model $h_\theta: \R^D \to \{1,2,\ldots,N\}$ for the classification task defined by the support data.
In a typical inductive setting considered here, 
we evaluate the model on the query set $\Q = \{q_i\}_{i=1}^{N_q}$ of previously unseen data.

The typical SS-FSL training divides the training of $h_\theta$ into two phases: \textbf{(i)} unsupervised pretraining of the encoder $f:\R^D \to \R^E$, and \textbf{(ii)} supervised training of a classifier $c: \R^E \to \R^N$ on $\S$ on top of the representation created by $f$, where $E$ is the embedding shape of the encoder~\cite{nam2023stunt,liu2024d2r2}. Therefore, the final model is given by $h_\theta = c \circ f$.

In this work, we follow the standard protocol for training the few-shot classifier and focus primarily on the pretraining phase, where we introduce a novel approach called \full{} (\our{}), described below.

\subsection{\full{} (\our{})}

\paragraph{Overview.}

The design of \our{} follows Self-supervised Joint-Embedding Architectures (JEAs), which learn through aligning semantically-related positive pairs of data in the representation space~\cite{chen2020simple,he2020momentum} and pushing away the unrelated ones. 
Unlike conventional JEAs, \our{} does not rely on sampling multiple data augmentations for the construction of positive pairs, which is problematic for tabular data.
Instead, in every minibatch, \our{} separates the tabular records "at birth" into two random complementary views, which we denote as feature and target views. 
The model is trained to align the data representations of feature views according to the similarity graph induced by the target views, therefore, learning semantically meaningful correspondences without relying on augmentations.
We outline the schema of \our{} in \Cref{fig:seba_architecture} and describe its components in detail below.

\paragraph{Separating the data at-birth.}

Let $\B$ be a minibatch of (unlabeled) data points, and let $m \in \{0,1\}^D$ be a binary mask vector sampled once per batch that defines features included in each view. 
The proportion of $1$-s in $m$ is controlled by a \emph{separation ratio}.
For every $x \in \B$, we create a feature view:
\begin{equation} \label{eq:feature}
x_f = x \odot (1-m) %
\end{equation}
and a target view:
\begin{equation}
x_t = x \odot m, 
\end{equation}
where $\odot$ is an element-wise multiplication, and $x_t, x_f \in \R^D$.
As such, the target and feature views are complementary and independent. 

\paragraph{Target similarity graph.}
We use target views to define the positive data pairs with respect to the sampled mask $m$. 
For each sample $x$ in the batch, we identify its nearest neighbor in terms of the target views:
\begin{equation} 
\label{eq:near}
x' = \arg \min_{a \in \B \setminus \{x\}} d(x_t,a_t).    
\end{equation}
In other words, $m$ defines positive pairs $(x,x')$ based on the nearest-neighbor graph defined in the target view.
In Appendix~\ref{app:theory}, we present the theoretical justification that our approach produces positive pairs, which are meaningful for solving a target task. Given Gaussian classes, we consider the probability of the mismatch event that a sample and its nearest neighbor calculated in a target view come from different classes. We derive that the expected value of this probability decays exponentially as the dimensionality of the target view and the separation between Gaussians increase. In consequence, we are allowed to perform nearest neighbor search in target view to produce meaningful positive pairs without the risk that the nearest neighbor will have a different class.

\paragraph{Alignment objective.}
Finally, we train the encoder to align the feature-view representations to match the nearest-neighbor relation defined in the target view.
For this purpose, we first construct the encoder representations of the feature views:
\begin{equation}
    h = f(x_f); h'=f(x'_f)
\end{equation}
Observe that the feature views $x_f$ do not contain unequivocal information about the data separation scheme $m$ and, as such, may not be sufficient to solve the alignment task on their own.
To address this problem, \our{} incorporates a conditioned projector $\pi: \R^E \times \R^D \to \R^P$, where $P$ is the embedding shape of $\pi$~\cite{przewiezlikowski2024augmentation,bordes2022guillotine}. 
The projector combines the general feature view representation of the encoder and information about how the data were separated (i.e. the mask vector $m$). 
The projector transforms the encoder representation into the task-specific latent space:
\begin{equation}
    z = \pi(h, m); z' = \pi(h', m),
\end{equation}
in which we optimize the alignment objective.
The objective takes form of the InfoNCE loss, which pulls together the positive representation pairs, and pushes away the unrelated ones~\cite{oord2018representation}:
\begin{equation}
    \label{eq:infonce}
    \mathcal{L}(x) = -\log \dfrac{
        \exp(d(z, z'))
    }{
        \Sigma_{a \in \B, a \neq x } \exp \biggl( d \Bigl( z, \pi \bigl(f(a_f), m \bigr) \Bigr) \biggr)
    }
\end{equation}
Because \our{} trains on numerous separation schemes (multiple mask vectors), the encoder adapts repeatedly to different target matching objectives. This exposure yields robust representations that generalize well to downstream tasks.

\subsection{Pretraining details}

The proposed pretraining benefits from several design choices, discussed below, and ablated in~\Cref{sec:ablation}.

\paragraph{Type-aware feature preprocessing and separation.} We normalize the numerical columns by removing the mean and scaling to unit variance, and encode the categorical data using one-hot vectors.
A straightforward separation applied to that representation could result in splitting a single categorical variable between the two views, even though all of its coordinates represent the same variable. 
The proper way to take into account the specific nature of the categorical variable, is to define the separation mask $m$ in the original representation (before the one-hot encoding), which prevents splitting a single category into two views. 

\paragraph{Missing data encoding.} 

When separating the data into feature and target views, the question arises of what to replace the separated features with.
Although this issue could be eliminated in a transformer-based architecture, it must be addressed with our chosen lightweight MLP encoder 
which is more suitable for small datasets~\cite{thimoniert}. 
In contrast to previous works, which commonly sampled masked features from empirical distribution \cite{nam2023stunt}, or trained a dedicated missing data imputation module ~\cite{smieja2018processing},
we find that zero imputation works best in our case.

\paragraph{Separation ratio.} To allow the encoder to create a general representation that suits multiple tasks, we could iterate over all possible mask vectors when separating the data into views. However, feeding the network with tabular masked records with variable mask size could hurt its performance \cite{yi2020not}. 
Consequently, during pretraining, we constrain the selection of the mask by fixing the separation ratio to a constant value. 
Choosing a single, most appropriate value of the separation ratio is challenging for two reasons: (i) tabular datasets are highly diverse and the amount of sufficiently descriptive features varies from dataset to dataset, and (ii), in SS-FSL settings with an extremely low amount of labeled data, we cannot allocate a portion of labeled samples for validation. 
Instead, we propose to generate multiple representations, each trained using a different separation ratio $r \in [0.1, 0.2, 0.3, 0.4, 0.5]$.
During inference, we ensemble their predictions which helps to avoid the aforementioned problems.
In practice, we find that while individual models trained with a single separation ratio achieve competitive results on different datasets, this strategy yields the most overall robust results.

\section{Experiments}

\subsection{Experimental setup}

We follow the benchmark proposed by~\cite{nam2023stunt} and then developed by~\cite{liu2024d2r2} verifying the performance of the models in a few-shot learning scenario.

\paragraph{Datasets preparation.}

Following~\cite{nam2023stunt}%
, we select 8 datasets from the OpenML-CC18 benchmark~\cite{asuncion2007uci,bischl2017openml}, on which we evaluate the performance of \our{}. We summarize the datasets in \Cref{sec:datasets}.
Following~\cite{nam2023stunt}, all datasets are randomly divided into train and test sets in a ratio of 5:1. Training data is treated as unlabeled and used for model pretraining. In addition, 10\% of the unlabeled training data is used for validation. Once the model is pretrained, it is fine-tuned on the support set and evaluated on the query set. The support and query samples are randomly sampled from the test set. In the $K$-shot $N$-way setting, the support set contains $K$ examples of each of $N$ classes. 
 We consider 1-, 5-, and 10-shot settings.

\paragraph{Setup of \our{}.} 

We pretrain the encoder and projector of \our{} for 10000 epochs, and perform early stopping when the value of the objective function, measured on the validation set, stops decreasing for 100 epochs. 
Following~\cite{nam2023stunt}, the encoder is a 2-layer MLP with a hidden dimension of 1024, and the projector is also a 2-layer network with the same hidden dimension and an embedding dimension of 256, a common choice in contrastive learning~\cite{chen2021empirical}.
In the downstream task stage, we freeze the encoder and train a classification head at the top of the encoder representation using the support set. For 5- and 10-shot, we use linear probing, while for 1-shot setting, we assign query samples to the closest, in terms of cosine distance, class prototypes obtained from the support set. 

In summary, for each dataset, we repeat the following procedure with 100 random seeds: First, we split the data and pretrain the encoder. Next, we sample the support and query sets from the labeled test data 100 times and average the accuracy metrics over all splits and all selections of the support/query sets. Therefore, every result is reported by averaging 10000 runs in total.

\subsection{Few-shot classification}
\label{sec:main_results}

\begin{table}[t]
\centering
\caption{Evaluation in terms of 1-shot classification accuracy.}
\footnotesize
\label{tab:fewshot_main_results_1shot}
\setlength{\tabcolsep}{2pt}
\resizebox{\linewidth}{!}{
\begin{tabular}{l c c c c c c c c c c}
\toprule
\textbf{Method} &
\multicolumn{1}{c}{CMC} &
\multicolumn{1}{c}{DIA} &
\multicolumn{1}{c}{DNA} &
\multicolumn{1}{c}{INC} &
\multicolumn{1}{c}{KAR} &
\multicolumn{1}{c}{OPT} &
\multicolumn{1}{c}{PIX} &
\multicolumn{1}{c}{SEM} &
\\
\midrule
\midrule
CatBoost & 36.03 & 56.74 & 39.15 & 57.55 & 53.24 & 58.30 & 54.74 & 43.21 \\
kNN & 35.39 & 58.50 & 42.20 & 51.45 & 54.61 & 65.60 & 60.79 & 44.35 &  \\
TabPFN & 35.37 & 53.35 & 41.83 & 49.30 & 46.02 & 55.74 & 23.79 & 28.01 &  \\
SubTab & 36.23 & 58.22 & 46.98 & 62.45 & 50.22 & 62.01 & 60.34 & 39.99 & \\
VIME & 35.90 & 58.99 & 51.23 & 61.82 & 59.81 & 69.26 & 63.28 & 46.99 &  \\
Scarf & 35.39 & 55.64 & 57.86 & 57.94 & 60.96 & 63.31 & 63.93 & 29.39 \\
T-JEPA & 34.28 & 50.44 & 36.14 &  48.97 &  30.22 & 39.84 & 39.89 & 25.47  \\
PseudoL & 34.97& 57.03 & 44.26 & 60.52 & 49.44 & 61.50 & 56.12 & \text{41.42} \\
MeanT & 35.58 & 58.05 &  46.58 &  60.63 & 54.57 & 66.10 & 61.02 & \text{43.56}  \\
SAINT & 35.41 & 59.65 &  36.88 &  54.69 & 40.13 & 52.92 & 24.25 & 32.52  \\
STUNT & 37.10 & 61.08 & 66.20 & 63.52 & 71.20 & 76.94 & 79.05 & 55.91 &
\\
D2R2-c & \best{40.81} & 60.10 & 61.29 & \best{72.85} & 61.45 & 77.41 & 61.45 & 34.26 & 
\\
\midrule
\our{} & 36.76 & \best{61.14} & \best{66.79} & 62.89 & \best{76.40} & \best{78.94} & \best{83.06} & \best{61.11}  &\\ 
\bottomrule
\end{tabular}
}
\end{table}

\begin{table}[t]
\centering
\caption{Evaluation in terms of 5-shot classification accuracy.}
\footnotesize
\label{tab:fewshot_main_results_5shot}
\setlength{\tabcolsep}{2pt}
\resizebox{\linewidth}{!}{
\begin{tabular}{l c c c c c c c c}
\toprule
\textbf{Method} &
\multicolumn{1}{c}{CMC} &
\multicolumn{1}{c}{DIA} &
\multicolumn{1}{c}{DNA} &
\multicolumn{1}{c}{INC} &
\multicolumn{1}{c}{KAR} &
\multicolumn{1}{c}{OPT} &
\multicolumn{1}{c}{PIX} &
\multicolumn{1}{c}{SEM} \\
\midrule
\midrule
CatBoost & 39.89 & 64.51 & 60.20 & 67.99 & 77.94 & 83.07 & 83.38 & 68.69  \\
kNN & 37.65 & 65.61 & 61.16 & 62.19 & 80.08 & 84.16 & 84.75 & 68.33 \\
TabPFN & 38.31 & 64.06 & 52.72 & 64.11 & 76.59 & 81.68 & 62.41 & 56.20  \\
SubTab & 39.81 & 68.26 & 62.49 & 72.14 & 70.88 & 83.27 & 80.41 & 59.87  \\
VIME & 39.83 & 67.64 & 71.29 & 72.19 & 19.42 & 83.21 & 85.24 & 68.45 \\
Scarf & 37.75 & 68.66 & 62.75 & 66.09 & 69.96 & 85.67 & 81.32 & 35.20 \\
T-JEPA & 37.25 & 64.06 & 43.04 & 62.13 & 60.07 & 71.93 & 73.66 & 50.56  \\
PseudoL & 37.49 & 64.46 & 60.06 & 66.26 & 78.60 & 83.71 & 82.94 & 82.94\\
MeanT & 37.73 &  65.45  & 61.47 & 67.05 & 81.08 & 86.66 & 85.24 & \text{69.67} \\
SAINT & 35.41 & 66.94 &  46.34 &  60.01 & 71.07 & 79.53 & 40.35 & 62.42 \\
STUNT & 40.40 & \best{69.88} & 79.18 & 72.69 & 85.45 & 88.42 & 89.08 & 71.54   
\\
D2R2-c & \best{43.39} & 68.69 & \best{81.39} & \best{73.34} & 79.49 & 87.12 & 82.22 & 60.16  
 \\
\midrule
\our{} & 42.85 & 69.54 & 79.86 & 71.28 & \best{87.59} & \best{90.11} & \best{91.88} & \best{79.41}  \\
\bottomrule
\end{tabular}
}
\end{table}

\begin{table}[t]
\centering
\caption{Evaluation in terms of 10-shot classification accuracy.}
\footnotesize
\label{tab:fewshot_main_results_10shot}
\setlength{\tabcolsep}{2pt}
\resizebox{\linewidth}{!}{
\begin{tabular}{l c c c c c c c c}
\toprule
\textbf{Method} &
\multicolumn{1}{c}{CMC} &
\multicolumn{1}{c}{DIA} &
\multicolumn{1}{c}{DNA} &
\multicolumn{1}{c}{INC} &
\multicolumn{1}{c}{KAR} &
\multicolumn{1}{c}{OPT} &
\multicolumn{1}{c}{PIX} &
\multicolumn{1}{c}{SEM} \\
\midrule
\midrule
CatBoost & 38.20 & 67.26 & 70.56 & 70.75 & 81.40 & 85.54 & 81.78 & 72.52 \\
kNN & 39.93 & 63.35 & 57.94 & 66.42 & 81.84 & 85.34 & 84.04 & 73.86 \\
TabPFN & 43.18 & 67.85 & 60.87 & 68.23 & 80.20 & 86.23 & 75.16 & 67.87 \\
SubTab & 42.12 & 67.31 & 64.14 & 72.81 & 85.27 & 85.79 & 86.75 & 70.44 \\
VIME & 39.20 & 65.78 & 62.94 & 69.15 & 76.12 & 83.57 & 45.61 & 70.65 \\
Scarf & 39.16 & 66.47 & 75.35 & 63.52 & 70.59 & 78.11 & 84.11 & 64.14 \\
T-JEPA & 38.17 & 66.75 & 48.70 & 66.56 & 73.51 & 82.00 & 83.56 & 63.68  \\
PseudoL & 42.01 & 65.90 & 64.49 & 63.08 & 86.51 & 89.29 & 43.28 & 77.86 \\
MeanT & 42.04 & 68.00 & 49.40 & 66.99 & 80.49 & 86.84 & 62.24 & 72.20 \\
SAINT & 42.74 & 68.10 & 61.76 & 64.87 & 79.06 & 85.21 & 58.25 & 70.03 \\
STUNT & 42.01 & 72.82 & 80.96 & 74.08 & 86.95 & 89.91 & 89.98 & 74.74 \\
D2R2-c & 38.41 & 71.94 & 81.46 & \best{74.84} & 86.19 & 91.84 & 84.93 & 69.56 \\
\midrule
\our{} & \best{46.30} & \best{73.61} & \best{83.59} & 72.68 & \best{90.88} & \best{92.62} & \best{93.88} & \best{84.11} \\
\bottomrule
\end{tabular}
}
\end{table}

\begin{figure}
    \centering
    \includegraphics[width=\linewidth]{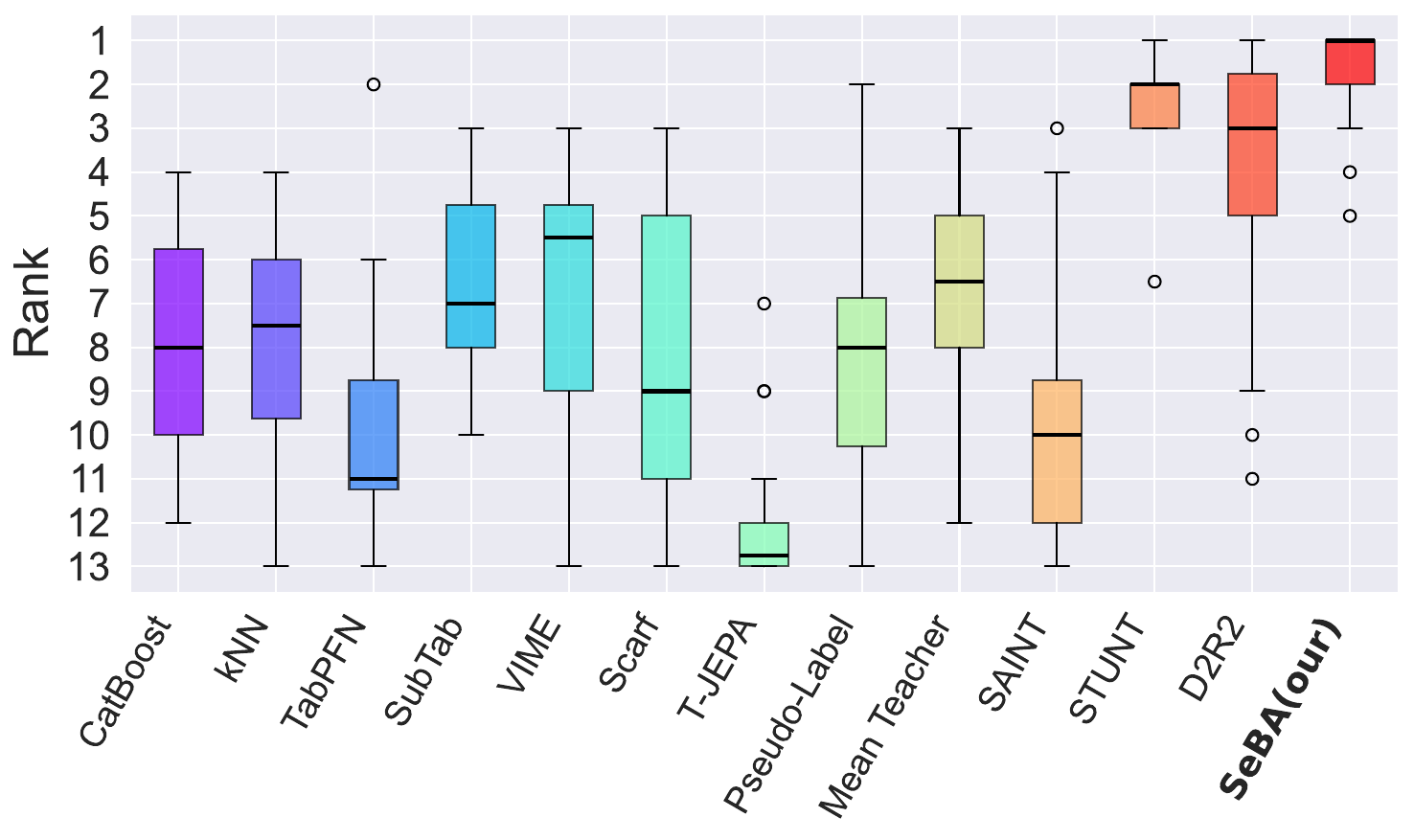}
    \caption{Summary of performance ranks of each analyzed method across few-shot classification tasks. \our{} is, on average, the best-ranking approach.}
    \label{fig:ranks_aggr}
\end{figure}

We evaluate \our{} in terms of its performance in downstream few-shot learning tasks.
We compare our method with the state-of-the-art SS-FSL methods,  STUNT~\cite{nam2023stunt}, and D2R2\footnote{We use an inductive variant of D2R2, i.e. D2R2-c, which uses mean support embeddings as the classifier~\cite{liu2024d2r2}. The default D2R2 uses an instance-wise iterative prototype scheme, additionally using query data for class prototype estimation. This is not consistent with the inductive setting, where queries are unseen during classifier training.}~\cite{liu2024d2r2}, as well as 
10 other baselines which represent the state-of-the-art supervised, self-supervised, and semi-supervised approaches to tabular data (see \Cref{app:methods} for more details). We report the average results of each method under the same evaluation protocol and train-test splits as \our{}.

\begin{figure*}[t]
    \centering
    \begin{subfigure}[t]{0.49\textwidth}
        \centering
        \includegraphics[width=\textwidth]
        {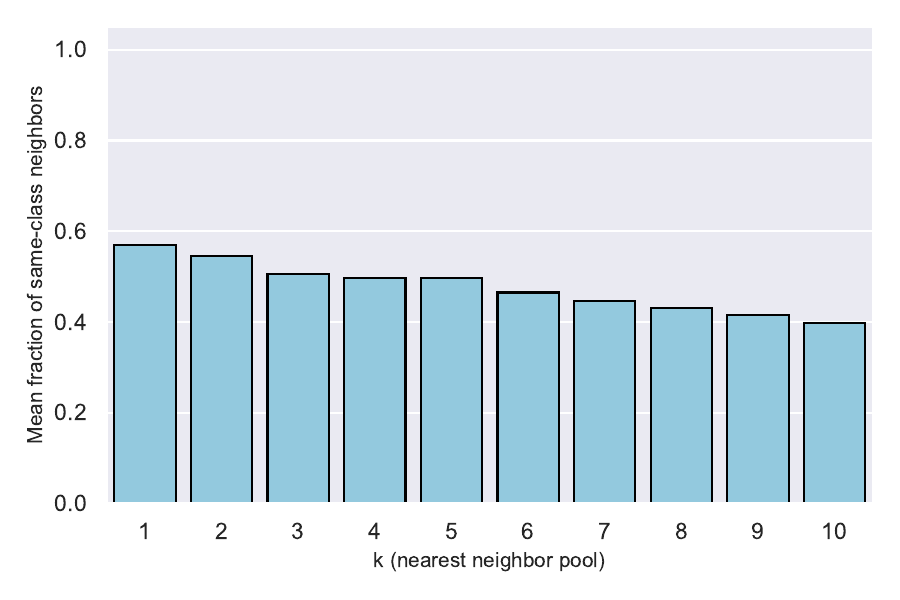}
        \caption{Fraction of same-class samples among $k$ nearest neighbors.}
        \label{fig:knn_fraction}
    \end{subfigure}
    \hfill
    \begin{subfigure}[t]{0.49\textwidth}
        \centering
        \includegraphics[width=\textwidth]{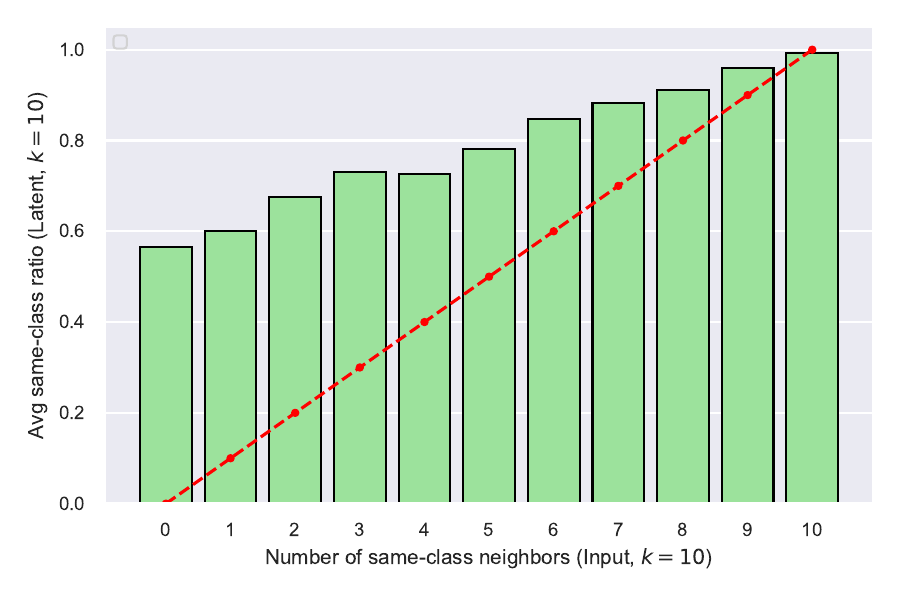}
        \caption{Comparison of 10-NN label consistency between input and latent spaces.}
        \label{fig:10nn_consistency}
    \end{subfigure}
    \caption{Analysis of \our{} on the Optdigits dataset: 
             (a) the fraction of same-class samples among the k nearest neighbors decreases as k increases, 
             and (b) the latent space significantly enhances label consistency compared to the input space baseline.
             }
    \label{fig:neighbor_analysis}
\end{figure*}

We present the results of the evaluation in 1-, 5-, and 10-shot classification in~\Cref{tab:fewshot_main_results_1shot,tab:fewshot_main_results_5shot,tab:fewshot_main_results_10shot}, respectively, and summarize them collectively in~\Cref{fig:ranks_aggr}, and in~\Cref{fig:ranks} separately for each value of $K$.
The efficacy of \our{} increases consistently with the number of support examples, as opposed to approaches like D2R2-c, which exhibit significant variance in quality on datasets like SEM.
\our{} achieves the best accuracy in
17 out of 24
instances and 
is among the three best approaches in 5 out of the remaining 7 instances,
which further confirms its practicality and applicability to a wide range of datasets. 
Notably, in all but one cases, \our{} achieves the best performance on datasets containing purely categorical values (DNA \& PIX). 
Moreover, the performance gap between \our{} and the baseline approaches is especially visible on highly-dimensional sets, such as PIX and SEM.
Furthermore, in ~\Cref{tab:fewshot_main_results_with_std} we present the standard deviations of the results of the three best-performing methods (STUNT, D2R2-c, and \our{}) and find that the results of \our{}  vary less than those of D2R2-c and STUNT, highlighting the greater stability of our method. 
We summarize the average performance of each method in~\Cref{fig:ranks}, from which it is evident that \our{} is generally the best-performing approach.

Beyond the standard datasets used to evaluate Semi-supervised Few-shot Learning  approaches~\cite{nam2023stunt,liu2024d2r2}, we also test \our{} on three additional benchmarks, which exemplify the common challenges in real-world tabular datasets: high data dimensionality and feature sparsity: CNAE, Devnagari, and FashionMNIST (images flattened into vectors). The strong empirical performance of \our{} (best performance in 7 out of 9 cases, and second-best in the remaining 2, see~\Cref{tab:challenging_datasets}), indicates that the proposed mechanism remains effective in such challenging conditions.

\begin{table}[t]
\centering
\caption{Evaluation of \our{} and baseline methods on additional highly dimensional and sparse datasets.}
\footnotesize
\label{tab:challenging_datasets}
\setlength{\tabcolsep}{2pt}
\resizebox{\linewidth}{!}{
\begin{tabular}{l ccc|ccc|ccc}
\toprule
& \multicolumn{3}{c|}{\textbf{1-shot}} & \multicolumn{3}{c|}{\textbf{5-shot}} & \multicolumn{3}{c}{\textbf{10-shot}} \\
\midrule
\textbf{Method} &
{CNAE} &
{DEV} &
{Fashion} &
{CNAE} &
{DEV} &
{Fashion} &
{CNAE} &
{DEV} &
{Fashion} \\
\midrule
\midrule
CatBoost & 51.03 & 9.57 & 37.31 & 72.98 & 36.51 & 61.99 & 80.06 & 46.87 & 69.83 \\
kNN & 39.28 & 17.92 & 39.43 & 41.38 & 25.81 & 52.27  & 44.03 & 33.99 & 57.34 \\
TabPFN & 20.18 & - & 36.98 & 29.60 & - & 56.79 & 32.59 & - & 64.43 \\
SubTab & 42.59 & 23.95 & 25.14 & 64.81 & 31.01 & 54.91 & 71.14 & 40.01 & 63.02 \\
VIME & 23.53 & 18.95 & 45.96 & 52.13 & 26.80 & 57.04 & 62.00 & 35.30 & 60.66 \\
Scarf & 40.93 & 9.19 & 36.36 & 53.97 & 10.55 & 45.69 & 59.59 & 11.82 & 48.34  \\
T-JEPA & 41.57 & 11.51 & 27.34 & 63.76 & 26.05 & 42.64 & 72.51 & 37.49 & 50.32 \\
PseudoL & 47.04 & 9.83 & 25.15 & \textbf{76.03} & 28.26 & 41.03 & \textbf{83.55} & 41.38 & 62.17 \\
MeanT & 29.71 & 9.24 & 24.18 & 66.73 & 35.86 & 60.64 & 75.95 & 47.45 & 68.78 \\
SAINT & 45.27 & 16.82 & 36.61 & 65.12 & 22.94 & 49.24 & 67.96 & 43.45 & 62.15 \\
STUNT & 57.12 & 13.51 & 47.66 & 75.21 & 22.93 & 59.86 & 76.79 & 28.94 & 61.01 \\
D2R2-c & 26.91 & 3.99 & 32.19 & 35.91 & 4.11 & 35.02 & 44.36 & 28.94 & 41.15 \\
\midrule
\our{} & \textbf{64.76} & \textbf{27.83} & \textbf{52.74} & 75.31 & \textbf{47.98} & \textbf{67.96} & 80.19 & \textbf{58.49} & \textbf{72.13} \\
\bottomrule
\end{tabular}
}
\end{table}

\subsection{Analysis of \full{}} \label{sec:analysis}

In this section, we evaluate the validity of the proposed \full{} as an unsupervised pretraining objective.
For this purpose, we analyze its stability and the semantic relationship of the positive pairs created by \our{}. 
We present the example results for the OPT dataset in~\Cref{fig:neighbor_analysis}, and the results for the remaining datasets in~\Cref{sec:analysis_appendix}. We describe our analysis and findings in the following paragraphs.

\paragraph{Sampling the immediate nearest neighbor as the target yields the best chance of pairing same-class samples.} In~\Cref{fig:knn_fraction} and~\ref{fig:knn_analysis_appendix}, we analyze whether it would be beneficial to take into account more nearest neighbors in the~\our{} objective.  We generate 100 random separations into feature and target views with the separation ratio of 0.2 and measure the proportion of same-class samples within an increasing pool of nearest neighbors of the target views.
It is evident that sampling from such an increased pool would in practice decrease the chance of drawing positive pairs of samples who share the same labels and misalign the resulting representation with the downstream task.

\paragraph{Representations trained with \our{} increase the semantical meaningfulness of the nearest-neighbor graphs.} In \Cref{fig:10nn_consistency} and~\ref{fig:10nn_consistency_appendix}, we analyze how the local neighborhood structure is transformed from the raw input space to the representation space produced by \our{}. To this end, we group data points according to the number of pairs of the same-class in the pool of $k=10$ nearest neighbors in the input space. We encode input data with a \our{}-trained model, and compute the average number of neighbors of the same-class in the latent space. Despite being trained with the input-space nearest neighbor graph as the objective, the \our{}-trained representations recover an increased number of semantically related sample pairs,
even in the case of samples which do not lie closely to other samples from the same class in the data space, further evidencing the validity of \our{}.

\begin{figure*}[t]
    \centering
    \begin{subfigure}[t]{0.32\textwidth}
        \includegraphics[width=\linewidth]{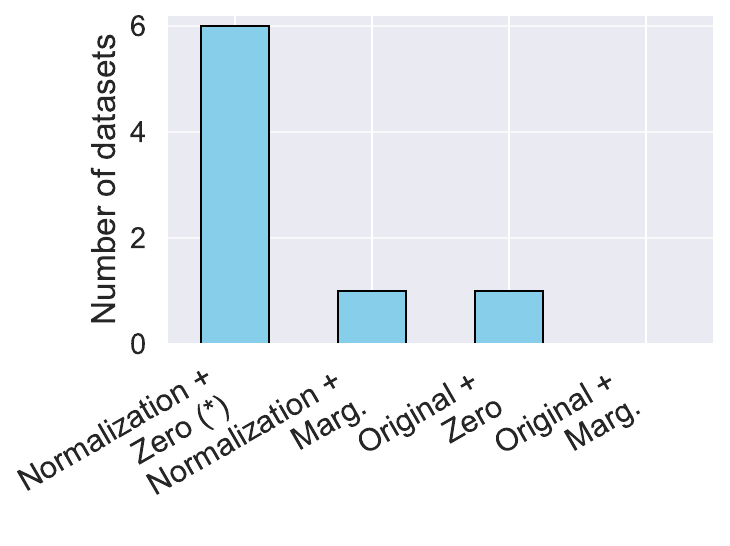}
        \caption{Ablation of data preprocessing techniques.}
        \label{fig:ablation_norm_imputation}
    \end{subfigure}
    \hfill
    \begin{subfigure}[t]{0.32\textwidth}
        \includegraphics[width=\linewidth]{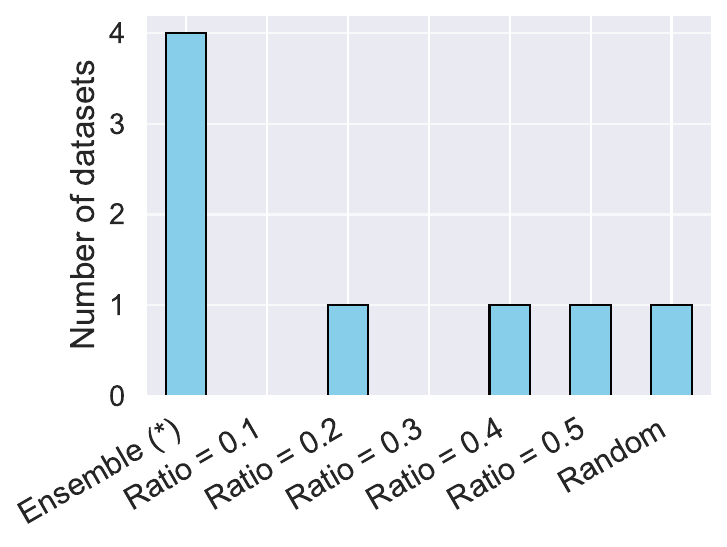}
        \caption{
            Ablation of separation ratio and model ensembling. 
        }
        \label{fig:ablation_masking_ensembling}
    \end{subfigure}
    \hfill
    \begin{subfigure}[t]{0.32\textwidth}
        \includegraphics[width=\linewidth]{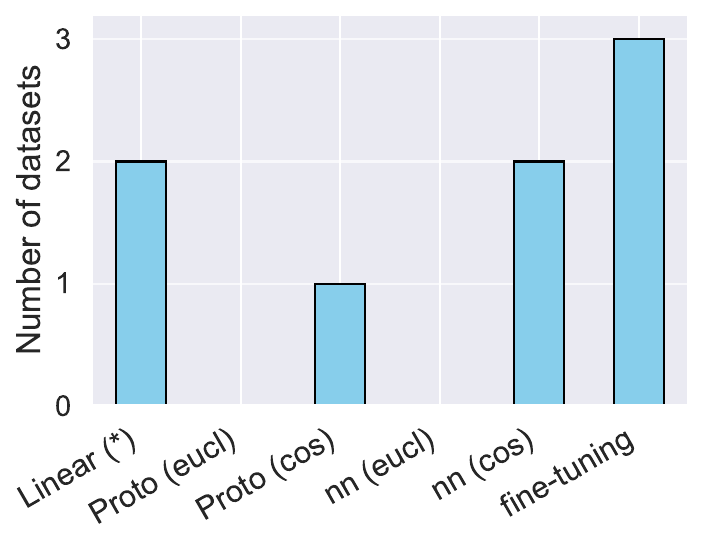}
        \caption{Ablation of few-shot classification approaches.}
        \label{fig:ablation_classifier}
    \end{subfigure}
    \caption{Ablation of the design aspects of \our{} (\textbf{(*)} denotes the default setting of \our{}). In the barplots, we report the number of datasets in which a given variant of \our{} performs best.}
    \label{fig:ablation}
\end{figure*}

\subsection{Ablation study} \label{sec:ablation}
In this section, we empirically demonstrate the usefulness of the core components of \our{}: the conditioned projector and the usage of the MLP encoder. We also ablate several design choices of \our{}: data preprocessing, separation ratio, and the choice of the multi-shot classifier. Unless otherwise specified, model variants are evaluated in terms of 5-shot classification accuracy with 5 random seeds.
We detail the variants of the model and report the results in~\Cref{tab:conditioned_projector,tab:mlp_and_ft_transformer} (core components), and in~\Cref{tab:ablation_norm_imputation,tab:ablation_masking_ensembling,tab:ablation_classifier_5shot} (design choices), which are also summarized in~\Cref{fig:ablation}.

\paragraph{Projector conditioning
(\Cref{tab:conditioned_projector}).} To understand the impact of conditioning the projector on separation information, we compare variants with and without conditioning (i.e. with a projector that processes only the encoder output) across 1-, 5-, and 10-shot learning downstream tasks. 
The variant with the conditioned projector generally performs better in all tasks. Notably, its advantage increases with the decrease in the amount of labeled data. 
This further suggests that conditioning helps to learn a representation of an improved quality, which is especially crucial under extremely low-data constraints.

\begin{table}[t]
\centering
\caption{Comparison of \our{} with conditioned and unconditioned projector.}
\label{tab:conditioned_projector}
\setlength{\tabcolsep}{1.5pt}
\footnotesize
\begin{tabular}{l c c c c c c c c c}
\toprule
\textbf{Condit.} &
\multicolumn{1}{c}{CMC} &
\multicolumn{1}{c}{DIA} &
\multicolumn{1}{c}{DNA} &
\multicolumn{1}{c}{INC} &
\multicolumn{1}{c}{KAR} &
\multicolumn{1}{c}{OPT} &
\multicolumn{1}{c}{PIX} &
\multicolumn{1}{c}{SEM} \\
\midrule
\multicolumn{9}{c}{\textit{1-shot}} \\
\midrule
Yes (*) & \textbf{36.76} & 
\textbf{61.14} & 
\textbf{66.79} & \textbf{62.89} & \textbf{76.40} & \textbf{78.94} & \textbf{83.06} & \textbf{61.11} \\ 
 No & 36.12 & 57.65 & 62.12 & 61.79 & 69.40 & 74.99 & 80.29 & 58.50 \\
 \midrule
 \multicolumn{9}{c}{\textit{5-shot}} \\
 \midrule
 Yes (*) & \textbf{42.85} & 
\textbf{69.54} & 
79.86 & \textbf{71.28} & 87.59 & \textbf{90.11} & 91.88 & \textbf{79.41} \\ 
 No & 40.96 & 65.59 & \textbf{81.61} & 71.06 & \textbf{88.05} & 90.02 & \textbf{92.60} & 78.62 \\
  \midrule
  \multicolumn{9}{c}{\textit{10-shot}} \\
  \midrule
 Yes (*) & \textbf{46.30} & 
\textbf{73.61} & 
\textbf{83.59} & 72.68 & \textbf{90.88} & 92.62 & 93.88 & 84.11 \\ 
 No & 43.10 & 67.59 & 80.39 & \textbf{73.69} & 88.99 & \textbf{92.86} & \textbf{94.21} & \textbf{84.28} \\
\bottomrule
\end{tabular}
\end{table}

\begin{table}[t]
\centering
\caption{Comparison of \our{} with MLP and FT-Transformer~\cite{gorishniy2023revisitingdeeplearningmodels} encoders in terms of 5-shot accuracy. The MLP architecture yields consistently stronger representations.}
\label{tab:mlp_and_ft_transformer}
\setlength{\tabcolsep}{1.5pt}
\footnotesize
\begin{tabular}{l c c c c c c c c c}
\toprule
\textbf{Encoder} &
\multicolumn{1}{c}{CMC} &
\multicolumn{1}{c}{DIA} &
\multicolumn{1}{c}{DNA} &
\multicolumn{1}{c}{INC} &
\multicolumn{1}{c}{KAR} &
\multicolumn{1}{c}{OPT} &
\multicolumn{1}{c}{PIX} &
\multicolumn{1}{c}{SEM} \\
\midrule
\midrule
FT-Trans. & 39.02 & 64.69& 73.17 & 69.51 & 84.14 & \textbf{91.47} & 86.63 & 78.59 \\ %
 MLP (*) & \textbf{42.85} & \best{69.54} & \textbf{79.86} & \textbf{71.28} & \best{87.59} & {90.11} & \best{91.88} & \best{79.41} \\ %
\bottomrule
\end{tabular}
\end{table}

\paragraph{Encoder architecture (\Cref{tab:mlp_and_ft_transformer}).} To understand whether \our{} would benefit from a more complex Transformer architecture, which is increasingly popular in the literature on tabular data~\cite{hollmanntabpfn,thimoniert}, we compare the standard 2-layer MLP encoder with a variant that uses an FT-Transformer architecture~\cite{gorishniy2023revisitingdeeplearningmodels} as an encoder. We find that the variant of \our{} with the MLP encoder yields better performance in almost all datasets.

\paragraph{Data preprocessing (\Cref{fig:ablation_norm_imputation} / \Cref{tab:ablation_norm_imputation}).} We ablate the usefulness of data normalization and two variants of missing data imputation: zero filling and sampling column values from marginal distribution. 
In most cases, the combination of data normalization and zero imputation yields representations of the highest quality.

\paragraph{Separation ratio (\Cref{fig:ablation_masking_ensembling} / \Cref{tab:ablation_masking_ensembling}).} We ablate the choice of target / feature separation ratio and compare it with an ensemble of encoders trained with all ratios, 
as well as randomly sampling the separation ratio during training. Although certain ratios and random sampling yield the best results for several datasets, we find that ensembles of encoders perform most reliably. 

\paragraph{Few-shot classifier (\Cref{fig:ablation_classifier} / \Cref{tab:ablation_classifier_5shot}).} We compare several choices of learning the classifier on top of the pretrained representation from the support data in the multi-shot setting. 
Linear probing is the simplest and robust approach. Although fine-tuning and k-NN classifiers are also competitive, they yield only slight improvements at the cost of increased complexity and variance in accuracy.

\section{Conclusion}

In this paper, we introduce \full{} (\our{}), a novel Semi-supervised Few-Shot Learning framework designed for tabular data.
\our{} uses the powerful Joint-Embedding Architecture (JEA) paradigm to pretrain its representations, while avoiding the problematic need to for manual data augmentation design -- the issue that has prevented the use of JEAs for tabular data in the past.
Instead, our core idea is to separate the data "at birth" into two independent, complementary subspaces and align the representations of one subspace to mirror the nearest-neighbor correspondence of the data in the second subspace. 
We demonstrate that this pretraining task indeed captures the semantic correspondence in a wide variety of tabular datasets.
\our{} achieves impressive performance in few-shot learning on various tabular datasets, confirming its effectiveness.
Our findings open new avenues for further investigations into tabular representation learning and are a useful foundation for data-constrained applications.

\section*{Acknowledgements}
The work of K. Jurek and M. \'Smieja was supported by the National Science Centre (Poland), grant no. 2023/50/E/ST6/00169. 
The research of M. Przewięźlikowski was supported by the National Science Centre (Poland), grant no. 2023/49/N/ST6/03268.
The work of W. Batko was funded by the program Excellence Initiative – Research University at the Jagiellonian University in Kraków.
We gratefully acknowledge Polish high-performance computing infrastructure PLGrid (HPC Center: ACK Cyfronet AGH) for providing computer facilities and support within computational grant no. PLG/2025/018312 and PLG/2025/018945

\bibliographystyle{siamplain}
\bibliography{ref}

\appendix

\section{Experiment details}

\subsection{Datasets}
\label{sec:datasets}

The details of the datasets are presented in \Cref{sec:data}.

\begin{table*}[h]
\centering
\caption{Overview of the datasets used in the experiments, including the number of instances, proportion of numerical and categorical features, and the number of classes.} \label{sec:data}
\begin{tabular}{clccc}
\toprule
\textbf{Dataset code} &
\textbf{Dataset}   & \# \textbf{Instances} & \# \textbf{Features (num., cate.)} & \# \textbf{Classes} \\
\midrule
\midrule
CMC & cmc       & 1473   & 9 (2,7)          & 3 \\
DIA & diabetes  & 768    & 8 (8,0)          & 2  \\
DNA & dna       & 3186   & 180 (0,180)      & 3  \\
INC & income    & 48842  & 14 (6,8)         & 2  \\
KAR & karhunen  & 2000   & 64 (64,0)        & 10 \\
OPT & optdigits & 5620   & 64 (64,0)        & 10 \\
PIX & pixel     & 2000   & 240 (0,240)      & 2  \\
SEM & semeion   & 1593   & 256 (256,0)      & 10 \\
CNAE & cnae     & 1080   & 856 (856,0)      & 9 \\
DEV & devnagari & 92000  & 1024 (1024,0) & 46 \\
Fashion & fashion-mnist  & 70000 & 784 (784,0) & 10 \\
\bottomrule
\end{tabular}
\end{table*}

\subsection{Baselines} \label{app:methods}

Together with \our{}, we report the performance of twelve methods taken from \cite{liu2024d2r2} with publicly available implementations that are adapted for tabular data. These methods represent three types of baselines:
\begin{enumerate}
    \item \textbf{Supervised.} CatBoost~\cite{prokhorenkova2018catboost} is considered a shallow SOTA approach to tabular data; k-NN~\cite{peterson2009k} works well for the few-shot case; TabPFN~\cite{hollmanntabpfn} represents a transformer-based zero-shot technique, which can be applied to small datasets.  
    \item \textbf{Self-supervised.} VIME~\cite{yoon2020vime}, SubTab~\cite{ucar2021subtab} and SCARF~\cite{bahri2021scarf}, T-JEPA~\cite{thimoniert}, and SAINT~\cite{somepalli2021saint} represent the typical SSL approaches for tabular data. The representations acquired from those models are used to conduct Center Prototype Classification or linear probing. 
    \item \textbf{Semi-supervised learning}. Mean Teacher~\cite{tarvainen2017mean} is a method which uses the consistency loss between the teacher output and student output. Pseudo-label~\cite{lee2013pseudo} predicts the labels of unlabeled data and then uses them as learning signal.
\end{enumerate}

STUNT~\cite{nam2023stunt} and D2R2~\cite{liu2024d2r2} were evaluated using authors' repositories with hyperparameters selection procedures implemented there. 
For D2R2, we run the variant denoted by D2R2-c,  which uses mean support embeddings as the classifier. The default D2R2 uses an instance-wise iterative prototype scheme, additionally using query data for class
prototype estimation. This is not consistent with the inductive setting, where queries are unseen during classifier training.

\subsection{Hyperparameters}

We report the values of the hyperparameters used by \our{} in \Cref{tab:hyper}.

\begin{table}[h]
\centering
\caption{\our{} hyperparameters} \label{tab:hyper}
\begin{tabular}{c c}
\toprule
\textbf{Hyperparameter} & \textbf{Value}  \\ %
\midrule
\midrule
\multicolumn{2}{c}{Pretraining}\\
\midrule
Epochs & 10.000   \\ %
Learning rate & 0.001 \\ %
Optimizer & Adam~\cite{kingma2014adam} \\
Batch size & 1024  \\ %
Early stopping patience & 100  \\ %
Encoder depth & 2 \\
Encoder hidden size & 1024  \\ %
Encoder output size & 256  \\ %
Projector depth & 2 \\
Projector hidden size & 1024  \\ %
Projector output size & 256  \\ %

\midrule
\multicolumn{2}{c}{Few-shot classification}\\

\midrule
Epochs & 10.000   \\ %
Learning rate & 0.001 \\ %
Optimizer & Adam~\cite{kingma2014adam} \\
\bottomrule
\end{tabular}
\end{table}

\subsection{Implementation details}

We implement \our{} in PyTorch~\cite{paszke2019pytorch}. We include the codebase as supplementary material and will publish it along with the paper.
All of the experiments described in the paper were run on a single NVidia-V100 GPU.

\section{Theoretical analysis} \label{app:theory}

\subsection{Summary of the results}

We present a theoretical analysis, which confirms that, on average, nearest neighbors in target views give samples of the same class.

For transparency, let us consider a simplified model of data on $\R^D$ representing two Gaussian classes:
\[
X\mid Y=c \sim \mathcal N(\nu_c,I),\qquad X\mid Y=c' \sim \mathcal N(\nu_{c'},I).
\]
Working with arbitrary covariances (but identical for both classes) is possible but requires data whitening, which further complicates the derivation. Therefore, we restrict our attention to the isotropic case. Let us denote the class-mean difference $\Delta = (\nu_{c'}-\nu_c)$. We can calculate the squared separation between Gaussian means as:
\[
\delta^2 \;=\; \Delta^T \Delta = \sum_{i=1}^D \Delta_i^2.
\]

We are interested in the probability of the \emph{mismatch} -- the event that the nearest neighbor of a sample $x$ has a different class than $x$. Observe that the probability of mismatch decreases as the separation becomes larger.  

Thus, we are interested in preserving separation in target views, where the nearest neighbors are calculated. If we consider a $n$-dimensional target view defined by a subset $S \subset \{1,\ldots,D\}$ with $|S|=n$, then the squared separation in this view equals: 
$$
\delta_S^2 = \sum_{i \in S} \Delta_i^2.
$$
As can be seen, we cannot guaranty that every target view provides a well-separated class-mean difference. However, we can show that with small probability we select a target view, where the probability of mismatch is large.

Our analysis consists of two parts:
\begin{enumerate}
    \item We bound the probability of mismatch in the target view $S$ from the above by the exponential term dependent on the separation $\delta_S^2$ in this view.
    \item Taking the expectation, we show that on average, the probability of mismatch in the target view decays exponentially in the separation $\delta^2$ in the original data space $\R^D$.
\end{enumerate}
The last fact is crucial because it guaranties that the separation is preserved on average if only initial classes are also separated enough. In consequence, we are allowed to perform nearest neighbor search in target view to produce meaningful positive pairs without the risk that the nearest neighbor will have a different class.

\subsection{Bound on the mismatch probability}

\begin{lemma} \label{lem:isotropic-mismatch}
Let \(X\), \(Y\) be independent draws from \(\mathcal{N}(\mu,I_d)\) (the same class)
, and let \(Z\) be an independent draw from \(\mathcal{N}(\mu+\Delta,I_d)\)
(a different class), where \(\Delta\in\mathbb{R}^d\) denotes the mean offset.
Define the mismatch event
\[
\mathcal{E} \;=\; \{\,\|Z-X\|^2 \le \|Y-X\|^2\,\},
\]
i.e., the sample of the other-class \(Z\) is no further than the sample of the same-class \(Y\).
There exists a global constant $C > 0$ such that
\[
\Pr(\mathcal{E})
\le 2 \exp(-C \|\Delta\|^2)
\]
\end{lemma}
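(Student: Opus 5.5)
The plan is to reduce the mismatch event to a statement about a few explicit Gaussian quantities. Then I would split it into two tail events and bound each one with a standard Gaussian or sub-exponential inequality.

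\textbf{Reduction.} Write $X=\mu+g$, $Y=\mu+h$ and $Z=\mu+\Delta+k$, where $g,h,k\sim\mathcal N(0,I_d)$ are independent. Expanding the squares gives
\[
\|Z-X\|^2-\|Y-X\|^2=\|\Delta\|^2+2\Delta^T(k-g)+\bigl(\|k-g\|^2-\|h-g\|^2\bigr).
\]
The last bracket factors as $A^TB$ with $A=k-h$ and $B=k+h-2g$. These two vectors are jointly Gaussian with covariances $2I_d$ and $6I_d$. Their cross-covariance is $I_d-I_d=0$, so $A$ and $B$ are independent.

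\textbf{Splitting.} The event $\mathcal E$ requires $\|\Delta\|^2+2\Delta^T(k-g)+A^TB\le 0$. Hence at least one of the following must hold:
\[
2\Delta^T(k-g)\le-\tfrac12\|\Delta\|^2 \qquad\text{or}\qquad A^TB\le-\tfrac12\|\Delta\|^2 .
\]
A union bound then produces the factor $2$.

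\textbf{First term.} Since $\Delta^T(k-g)\sim\mathcal N(0,2\|\Delta\|^2)$, the Gaussian tail bound gives $\exp(-\|\Delta\|^2/64)$.

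\textbf{Second term.} Conditionally on $B$, we have $A^TB\sim\mathcal N(0,2\|B\|^2)$. Moreover $\|B\|^2/6\sim\chi^2_d$. Equivalently, $A^TB$ is a sum of $d$ independent products of Gaussians, which is sub-exponential. Bernstein's inequality then gives
\[
\Pr\bigl(A^TB\le -t\bigr)\le\exp\bigl(-c\min(t^2/d,\;t)\bigr),
\]
which I would apply with $t=\|\Delta\|^2/2$.

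\textbf{Main obstacle: the constant depends on $d$.} The variance of $A^TB$ is $12d$, so the constant cannot be dimension-free. For fixed $\Delta$ and $d\to\infty$, the mismatch probability tends to $1/2$. I would therefore read ``global constant'' as independent of $\Delta$ (and of $\mu$) but allowed to depend on $d$. I would prove the bound with $C=c'/d$ for a small absolute $c'$, by two cases:
\begin{itemize}
\item If $\|\Delta\|\ge1$, then $\min(\|\Delta\|^4/(4d),\|\Delta\|^2/2)\ge\|\Delta\|^2/(4d)$, which yields the claimed exponential rate.
\item If $\|\Delta\|<1$, choose $C\le\ln 2$. Then $2\exp(-C\|\Delta\|^2)\ge1$, and the bound holds trivially.
\end{itemize}
Combining the two cases with the first-term bound gives $\Pr(\mathcal E)\le 2\exp(-C\|\Delta\|^2)$ with $C=\min\{1/64,\,c/(4d),\,\ln2\}$. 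In the paper's application $d=n=|S|$, so this $C$ scales like $1/n$.
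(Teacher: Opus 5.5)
Your proof is correct and shares the paper's skeleton: the same expansion, the same union-bound split at $\tfrac12\|\Delta\|^2$, and the same $\exp(-\|\Delta\|^2/64)$ bound on the linear term. Where you differ is the quadratic term.

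The paper writes it as $\|U\|^2-\|V\|^2$ with $U=\varepsilon_Z-\varepsilon_X$ and $V=\varepsilon_Y-\varepsilon_X$. It then bounds the lower tail of $\|U\|^2$ and the upper tail of $\|V\|^2$ separately by chi-square concentration. That route needs no independence, which is fortunate: the paper's claim that $U,V$ are independent is false, since their cross-covariance is $I_d$. However, it uses the upper tail $\Pr(\chi^2_d\ge d(1+\epsilon))\le e^{-d\epsilon^2/4}$ for all $\epsilon>0$, which is not valid. For $\epsilon\gtrsim1$, i.e.\ $\|\Delta\|^2\gtrsim 8d$, the true tail is only $e^{-\Theta(d\epsilon)}$.

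Your factorization into $A^\top B$ with genuinely independent $A,B$, followed by Bernstein, gives the correct two-regime rate $\exp(-c\min(\|\Delta\|^4/d,\|\Delta\|^2))$ in one step. It also exposes the variance $12d$, which underlies your correct observation that no dimension-free $C$ exists. And it yields exactly the prefactor $2$ from two events. The paper, by contrast, passes from three exponentials to $2\exp(-C\|\Delta\|^2)$ via a ``common factor'' that, in its case $\|\Delta\|^2<d$, must depend on $d$. So reading ``global'' as independent of $\Delta$ and $\mu$ but not of $d$ is the right repair, and it is what the paper's argument actually delivers.

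One refinement: $C\asymp 1/d$ is lossy. Put the trivial case at $C\|\Delta\|^2\le\ln 2$ rather than at $\|\Delta\|<1$. Then, in the regime $\|\Delta\|^2<2d$, you need $c\|\Delta\|^2/(4d)\ge C$ only for $\|\Delta\|^2>\ln2/C$. This holds when $C^2\le c\ln2/(4d)$, together with $C\le\min\{1/64,c/2\}$, so $C\asymp d^{-1/2}$ suffices. The same CLT heuristic, run at $\|\Delta\|^2\asymp\sqrt d$ (where the mismatch probability stays bounded away from $0$), shows this order is sharp. Hence in the paper's application the constant scales like $n^{-1/2}$ rather than $n^{-1}$.
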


\begin{proof}
Denote
\[
X=\mu+\varepsilon_X,\qquad Y=\mu+\varepsilon_Y,\qquad Z=\mu+\Delta+\varepsilon_Z,
\]
with \(\varepsilon_X,\varepsilon_Y,\varepsilon_Z\overset{\text{i.i.d.}}{\sim}\mathcal{N}(0,I_d)\).
Set
\[
U:=\varepsilon_Z-\varepsilon_X,\qquad V:=\varepsilon_Y-\varepsilon_X.
\]
Then \(U\) and \(V\) are independent vectors of \(\mathcal{N}(0,2I_d)\) and 
\[
\begin{aligned}
D &:= \|Z-X\|^2 - \|Y-X\|^2
   \;=\; \|\Delta+U\|^2 - \|V\|^2 \\
  &= \|\Delta\|^2 + 2\Delta^\top U + \big(\|U\|^2 - \|V\|^2\big).
\end{aligned}
\]
The mismatch event \(\mathcal{E}\) equals \(\{D\le 0\}\), that is,
\begin{equation}
2\Delta^\top U + \big(\|U\|^2-\|V\|^2\big) \le -\|\Delta\|^2. \tag{1}
\end{equation}

If the sum of two independent random terms is \(\le -\|\Delta\|^2\), then at least one
of them must be \(\le -\tfrac12\|\Delta\|^2\). Hence, by the union bound,
\begin{equation}
\begin{aligned}
    \Pr(D\le 0) \le{} & \Pr\!\Big(2\Delta^\top U \le -\tfrac12\|\Delta\|^2\Big) \\
                     & + \Pr\!\Big(\|U\|^2-\|V\|^2 \le -\tfrac12\|\Delta\|^2\Big).
\end{aligned}
\tag{2}
\end{equation}

We bound each term on the right-hand side.

\smallskip\noindent\textbf{(i) Linear term.}
The scalar random variable \(G:=2\Delta^\top U\) is Gaussian with mean \(0\) and variance
\[
\mathrm{Var}(G) \;=\; 4 \mathrm{Var}(\Delta^\top U)
= 4\cdot \Delta^\top (2I_d)\Delta
= 8\|\Delta\|^2.
\]
Therefore, for \(a=\tfrac12\|\Delta\|^2\), the Gaussian tail bound gives
\begin{equation}
\begin{aligned}
\Pr\!\Big(2\Delta^\top U \le -a\Big)
& \le \exp\!\Big(-\frac{a^2}{2\mathrm{Var}(G)}\Big) \\
& = \exp\!\Big(-\frac{(\tfrac12\|\Delta\|^2)^2}{16\|\Delta\|^2}\Big) \\
& = \exp\!\Big(-\frac{\|\Delta\|^2}{64}\Big).
\end{aligned}
\tag{3}
\end{equation}

\smallskip\noindent\textbf{(ii) Quadratic-difference term.}
Note that \(\|U\|^2\) and \(\|V\|^2\) are independent and each has the distribution
of \(2\) times a \(\chi^2_d\) random variable, hence \(\mathbb{E}\|U\|^2=\mathbb{E}\|V\|^2=2d\).
We bound the probability that their difference is \(\le -\tfrac12\|\Delta\|^2\) by
controlling the deviations of each chi-square from its mean. For any \(t>0\),
\[
\begin{aligned}
\Pr \Big( \|U\|^2 - \|V\|^2 \le -t \Big) 
&\le \Pr \Big( \|U\|^2 \le 2d - \tfrac{t}{2} \Big) \\
&\quad + \Pr \Big( \|V\|^2 \ge 2d + \tfrac{t}{2} \Big).
\end{aligned}
\]
Standard chi-square concentration (e.g. \(\Pr(\chi^2_d \le d(1-\epsilon))\le \exp(-d\epsilon^2/4)\)
for \(0<\epsilon<1\), and \(\Pr(\chi^2_d \ge d(1+\epsilon))\le \exp(-d\epsilon^2/4)\) for \(\epsilon>0\))
implies that, for \(t=\tfrac12\|\Delta\|^2\), both probabilities are bounded by \(\exp(- \frac{\|\Delta\|^4}{256d})\) and, in consequence,
\begin{equation}
\Pr\!\Big(\|U\|^2-\|V\|^2 \le -\tfrac12\|\Delta\|^2\Big)
\le 2\exp\!\Big(- \frac{\|\Delta\|^4}{256d})\Big).
\tag{4}
\end{equation}

\smallskip Combining (2),(3),(4) yields
\[
\Pr(D\le 0)
\le \exp\!\Big(-\frac{\|\Delta\|^2}{64}\Big)
+2\exp\!\Big(- \frac{\|\Delta\|^4}{256d})\Big).
\]
Finally, we may consider two cases: (i) $\|\Delta\|^2 \geq d$, and (ii) $\|\Delta\|^2 < d$. In both cases, we can find a common factor $C > 0$ such that
\[
\Pr(D \leq 0)
\le 2 \exp(-C \|\Delta\|^2)
\]
which completes the proof.
\end{proof}

\subsection{Expected mismatch probability}

Let $S$ define $n$ dimensional target view. By $P_S(X)$ we denote a coordinate-wise projection of a sample $X \subset \R^D$ onto a target view $S$. Given two samples $X,Y \subset \R^D$ from a class $c$, a other sample $Z \subset \R^D$ from class $c' \neq c$, we define the probability of mismatch in a view $S$ as
\[
\begin{aligned}
p_{\mathrm{mismatch}}(S) 
& := \Pr \big( \|P_S(Z)-P_S(X)\|^2 \\
& \qquad \le \|P_S(Y)-P_S(X)\|^2 \big).
\end{aligned}
\]

The following theorem shows that the expected mismatch probability in the target view decays exponentially in the preserved fraction $n/D$ of the full separation $\delta^2$ and it does not depend on the separation in the individual target views.
\begin{theorem}
Let us consider data on $\R^D$ representing two Gaussian classes:
\[
X\mid Y=c \sim \mathcal N(\nu_c,I),\qquad X\mid Y=c' \sim \mathcal N(\nu_{c'},I) 
\]
with the squared separation term $
\delta^2 \;=\; (\nu_{c'}-\nu_c)^T (\nu_{c'}-\nu_c)$. Let $S$ be chosen uniformly at random among all $\binom{D}{n}$ $n$ dimensional target views. Then the expected probability of mismatch can be bounded as
\[
\mathbb E_S[p_{\mathrm{mismatch}}(S)] \le 2\exp\!\Big(-C \frac{n \delta^2}{D}\Big),
\]
for some constant $C > 0$.
\end{theorem}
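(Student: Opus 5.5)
The plan is to reduce to Lemma~\ref{lem:isotropic-mismatch} for a fixed view and then average over the random choice of $S$. First I fix $S$ with $|S|=n$. The projections $P_S(X),P_S(Y)$ are i.i.d.\ $\mathcal N(P_S\nu_c,I_n)$ and $P_S(Z)\sim\mathcal N(P_S\nu_{c'},I_n)$, because coordinates of an isotropic Gaussian are independent. So Lemma~\ref{lem:isotropic-mismatch} applies with $d=n$ and offset $\Delta_S=P_S\Delta$. It gives
\[
p_{\mathrm{mismatch}}(S)\le 2\exp(-C\delta_S^2),\qquad \delta_S^2=\sum_{i\in S}\Delta_i^2 .
\]
Since $C$ in the lemma is global, it does not depend on $n$. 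It therefore remains to bound $\mathbb E_S[\exp(-C\delta_S^2)]$.

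Jensen cannot be used here, since $u\mapsto e^{-Cu}$ is convex and Jensen would give the wrong direction. Instead I would use Hoeffding's comparison between sampling without and with replacement. $\delta_S^2$ is a sum of $n$ draws without replacement from the multiset $\{\Delta_1^2,\dots,\Delta_D^2\}$, and for any convex $g$ the expectation $\mathbb E g(\delta_S^2)$ is at most the corresponding expectation for $n$ i.i.d.\ uniform draws. This yields
\[
\mathbb E_S[e^{-C\delta_S^2}]\le\Big(\tfrac1D\textstyle\sum_{i=1}^D e^{-C\Delta_i^2}\Big)^n .
\]

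\textbf{Main obstacle.} The last step is to bound the average $\frac1D\sum_i e^{-C\Delta_i^2}$ by $\exp(-C'\delta^2/D)$. This cannot hold without a further hypothesis. If $\Delta$ is supported on a single coordinate, then with probability $1-n/D$ the view $S$ misses that coordinate, so $\delta_S=0$ and the mismatch probability is $1/2$. The expectation then stays of order $(1-n/D)/2$ no matter how large $\delta$ is. So the proof needs an explicit spread assumption, which I would add to the statement: $\Delta_i^2\le B$ for all $i$, where $B$ is a per-coordinate signal bound and $C'$ is allowed to depend on it. Under this assumption, convexity of $e^{-Cu}$ on $[0,B]$ gives
\[
e^{-Cu}\le 1-\frac{(1-e^{-CB})}{B}\,u .
\]
Averaging over $i$ gives
\[
\frac1D\sum_i e^{-C\Delta_i^2}\le 1-\frac{(1-e^{-CB})\delta^2}{BD}\le \exp\!\Big(-C'\frac{\delta^2}{D}\Big),\qquad C'=\frac{1-e^{-CB}}{B}.
\]

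Raising this to the $n$-th power and multiplying by the factor $2$ from the first step gives $\mathbb E_S[p_{\mathrm{mismatch}}(S)]\le 2\exp(-C'n\delta^2/D)$, which is the claimed bound with constant $C'$. I would present the spread assumption as the condition under which "for some constant $C>0$" holds. An alternative that avoids a hard per-coordinate bound is a Bernstein-type lower tail for $\delta_S^2$ controlled by $\max_i\Delta_i^2$. Its exponent likewise degrades when the separation is concentrated in a few coordinates, so it does not remove the need for some spread assumption.
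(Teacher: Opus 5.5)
Your objection to the averaging step is correct, and it is exactly where the paper's proof breaks. After applying Lemma~\ref{lem:isotropic-mismatch}, the paper invokes Jensen as $\mathbb E[e^{-X}]\le e^{-\mathbb E[X]}$, which is the reverse of what convexity of $e^{-x}$ gives. Your one-coordinate example is also valid. With $\Delta=\delta e_1$, the view satisfies $p_{\mathrm{mismatch}}(S)=1/2$ whenever $1\notin S$, so for $n<D$ the stated bound fails for every $C$ that does not depend on $\delta$. The statement as written is therefore false, and an extra hypothesis is unavoidable. Your substitute for Jensen is Hoeffding's convex-order comparison of sampling without and with replacement, followed by the chord bound on $[0,B]$. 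It is sound, and it is the genuinely different part of your route. Its price is the spread hypothesis $\Delta_i^2\le B$ and a constant $C'=(1-e^{-CB})/B$ that depends on $B$.

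The gap that remains is in your first step, where you use that the lemma's $C$ does not depend on the view dimension $d=n$. The paper's proof relies on this too, but the claim is false.
\begin{itemize}
\item The lemma's proof only reaches $e^{-\|\Delta\|^2/64}+2e^{-\|\Delta\|^4/(256d)}$. Its closing two-case remark cannot turn this into $2e^{-C\|\Delta\|^2}$ when $\|\Delta\|^2<d$.
\item A direct computation confirms the failure. $\|U\|^2-\|V\|^2$ is a sum of $d$ i.i.d.\ centered terms with standard deviation $4\sqrt d$, while $2\Delta^\top U$ has variance only $8\|\Delta\|^2$. For $\|\Delta\|^2=4\sqrt d$ the central limit theorem gives $\Pr(\mathcal E)\to\Phi(-1)\approx 0.16$ as $d\to\infty$ (here $\Phi$ is the standard normal CDF). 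Meanwhile $2e^{-4C\sqrt d}\to 0$.
\item This also breaks your repaired statement. Take $\Delta_i^2=4/\sqrt n$ for all $i$, which is admissible with $B=1$ once $n\ge 16$. Every view then has $\delta_S^2=4\sqrt n$, so $\mathbb E_S[p_{\mathrm{mismatch}}(S)]\to\Phi(-1)$, whereas your bound is $2e^{-4C'\sqrt n}\to 0$.
\end{itemize}

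What the lemma's decomposition actually yields, using standard Laurent--Massart chi-square tails, is $p_{\mathrm{mismatch}}(S)\le 3\exp(-c\min\{\delta_S^2,\delta_S^4/n\})$ for a universal $c$. Your method survives this correction. Since $\min\{u,u^2/n\}\ge u-n/4$, you get the convex majorant $3e^{cn/4}e^{-c\delta_S^2}$. Hoeffding plus the chord bound then give $\mathbb E_S[p_{\mathrm{mismatch}}(S)]\le 3\exp(-\frac{1}{2}C'n\delta^2/D)$, now with $C'=(1-e^{-cB})/B$, whenever $\delta^2/D\ge cB/(2(1-e^{-cB}))$. So a correct version needs your spread hypothesis together with a lower bound on the average per-coordinate separation $\delta^2/D$. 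For $\delta^2/D\ll 1$ the true decay rate is only of order $n\delta^4/D^2$, as the equal-coordinates example shows.
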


\begin{proof}
The projected squared separation equals
$\delta_S^2=\sum_{i\in S} a_i$ and Lemma \ref{lem:isotropic-mismatch} implies that
\[
p_{\mathrm{mismatch}}(S) \le 2\exp\!\Big(-C \delta_S^2\Big),
\]
for some constant $C>0$.

Taking the expectation over $S$ and using Jensen's inequality ($e^{-x}$ is convex downward, so $\mathbb E[e^{-X}]\le e^{-\mathbb E[X]}$), we get:
\[
\mathbb E_S[p_{\mathrm{mismatch}}(S)] \le 2\exp\!\Big(-C\mathbb E_S[\delta_S^2]\Big).
\]
Finally, $\mathbb E_S[\delta_S^2] = \frac{n}{D}\delta^2$ by the symmetry of uniform sampling without replacement, which gives:
\[
\mathbb E_S[p_{\mathrm{mismatch}}(S)] \le 2\exp\!\Big(-C \frac{n \delta^2}{D}\Big).
\]
\end{proof}

\section{Additional experimental results}

\begin{figure}[h]
    \centering
    \begin{subfigure}{0.49\textwidth}
        \centering
        \includegraphics[width=\linewidth]{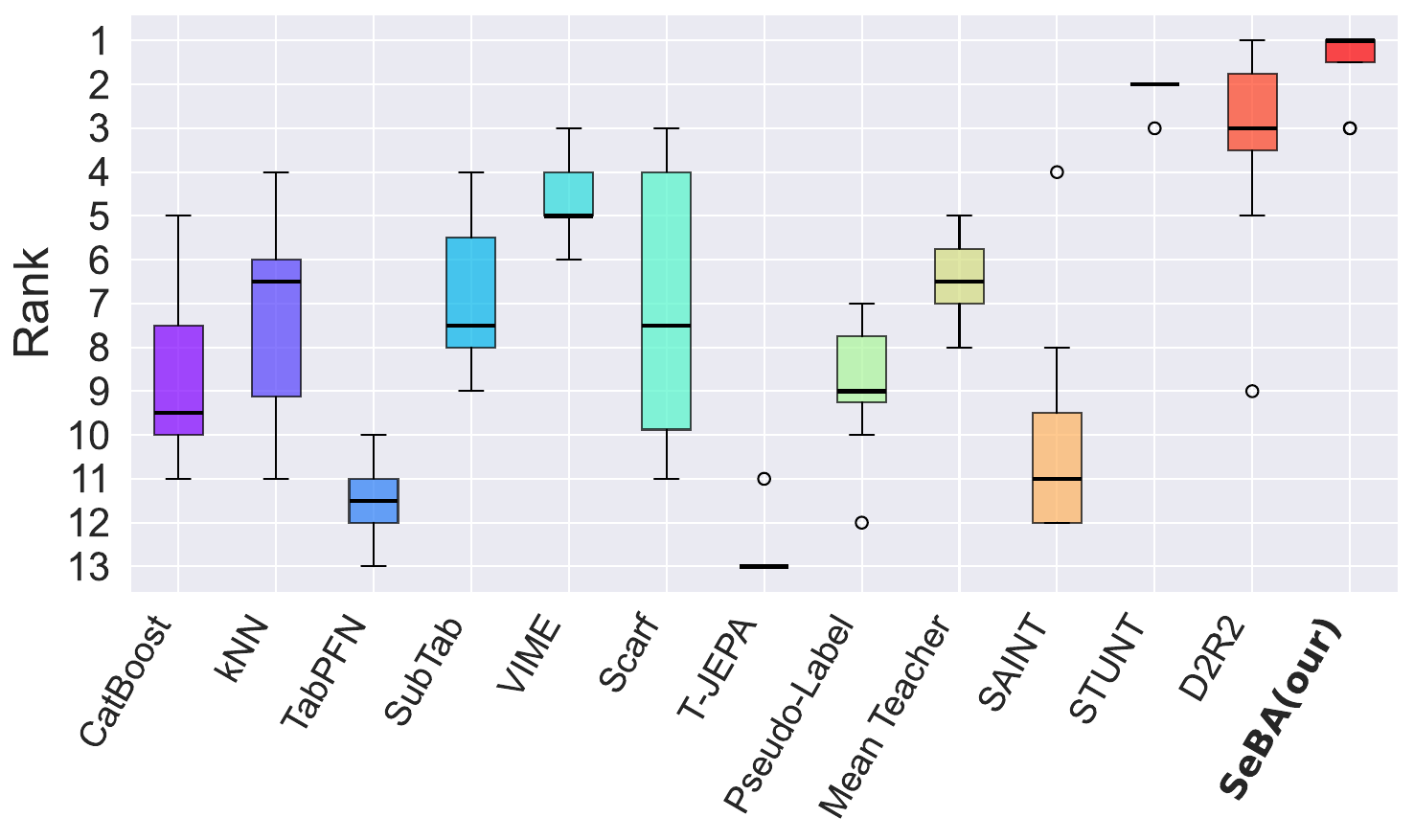}
        \caption{1-shot classification accuracy}
    \end{subfigure}
    \hfill
    \begin{subfigure}{0.49\textwidth}
        \centering
        \includegraphics[width=\linewidth]{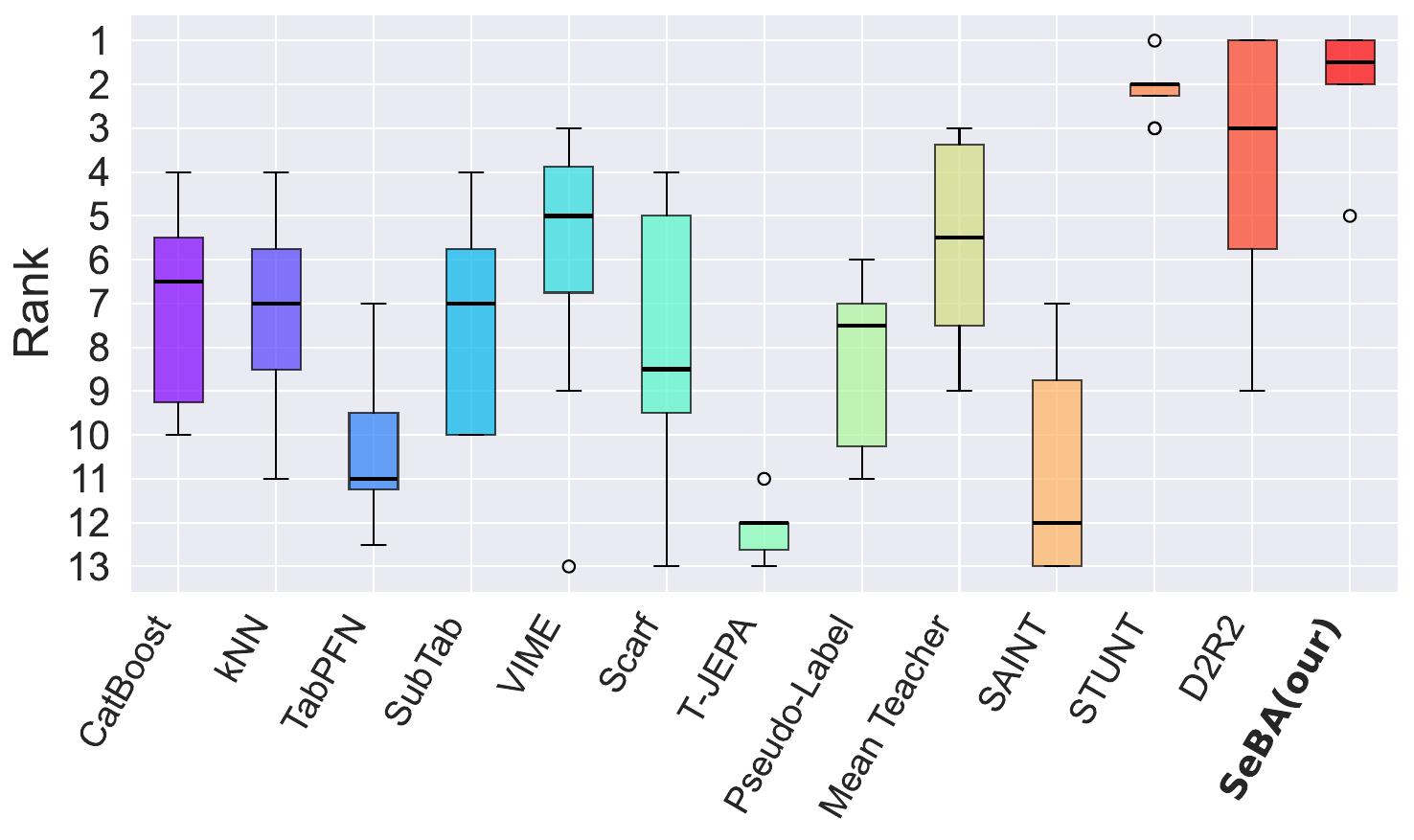}
        \caption{5-shot classification accuracy}
    \end{subfigure}
    \hfill
    \begin{subfigure}{0.49\textwidth}
        \centering
        \includegraphics[width=\linewidth]{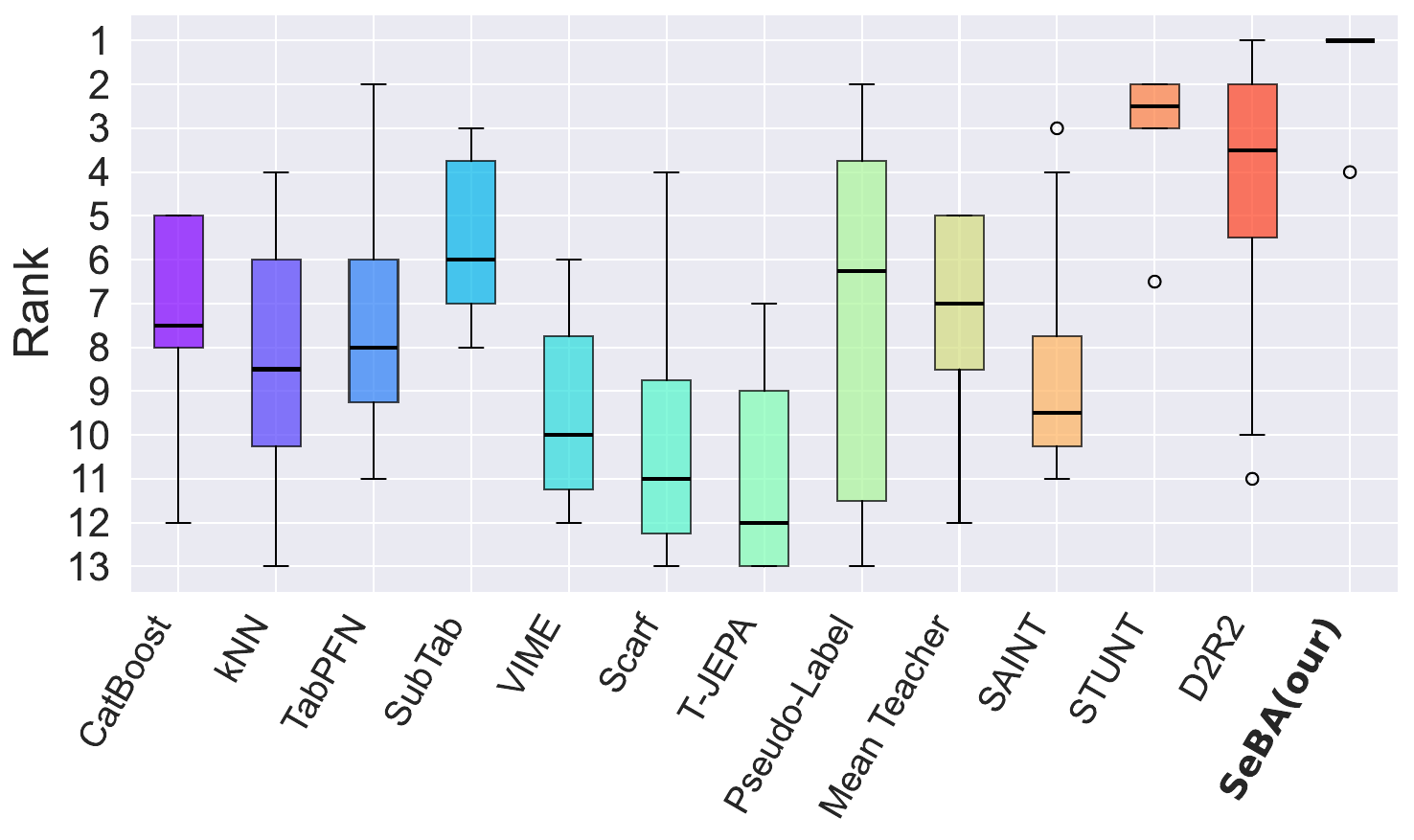}
        \caption{10-shot classification accuracy}
    \end{subfigure}
    \caption{Box-plots of few-shot classification ranks of benchmarked approaches. \our{} is the most consistently high-ranking method.
    }
    \label{fig:ranks}
    \end{figure}

\subsection{Detailed ablation study results}

In~\Cref{tab:ablation_norm_imputation,tab:ablation_masking_ensembling,tab:ablation_classifier_5shot}, we include the details of variants and results of ablations described in~\Cref{sec:ablation}.

\begin{table}[h!]
\centering
\caption{Ablation of normalizing the  numerical columns in tabular data (\textbf{Norm.}) and of the type of data imputation in separated columns (\textbf{Imput.}), where we compare zero-imputation (Zero), and sampling from the column's marginal distribution (Marg.). 
}
\label{tab:ablation_norm_imputation}
\setlength{\tabcolsep}{1.5pt}
\footnotesize
\resizebox{\linewidth}{!}{
\begin{tabular}{l c c c c c c c c c}
\toprule
\textbf{Norm.} & \textbf{Imput.} &
\multicolumn{1}{c}{CMC} &
\multicolumn{1}{c}{DIA} &
\multicolumn{1}{c}{DNA} &
\multicolumn{1}{c}{INC} &
\multicolumn{1}{c}{KAR} &
\multicolumn{1}{c}{OPT} &
\multicolumn{1}{c}{PIX} &
\multicolumn{1}{c}{SEM} \\
\midrule
\midrule
\multirow{2}{*}{True}   & Zero \textbf{(*)} & \textbf{42.85} & {69.54} & \textbf{79.86} & \textbf{71.28} & 87.59 & \best{90.11} & \best{91.88} & \best{79.41} \\ %

  & Marg. & 41.71 & \best{69.78} & 68.69 & 67.89 & 86.36 & 89.18 & 89.67 & 77.32 \\ %
\multirow{2}{*}{False} & Zero & 40.50 & 53.49 & 70.82 & 47.01 & \best{91.00} & 87.91 & 90.54 & 77.28 \\ %
                      & Marg. & 37.95 & 54.97 & 69.71 & 46.44 & 89.09 & 89.41 & 89.29 & 77.85 \\ %
\bottomrule
\end{tabular}
}
\end{table}
\begin{table}[h!]
\centering
\caption{Ablation of the separation ratios between the target and feature data views, compared with the ensemble of encoders trained with different ratios and randomly sampled mask ratios.
While encoders trained with constant separation ratios achieve strong performance, ensembling their predictions is the most robust in practice.
}
\label{tab:ablation_masking_ensembling}
\setlength{\tabcolsep}{1.5pt}

\resizebox{\linewidth}{!}{

\begin{tabular}{l c c c c c c c c}
\toprule
\textbf{Mode} & 
\multicolumn{1}{c}{CMC} & 
\multicolumn{1}{c}{DIA} & 
\multicolumn{1}{c}{DNA} & 
\multicolumn{1}{c}{INC} & 
\multicolumn{1}{c}{KAR} & 
\multicolumn{1}{c}{OPT} & 
\multicolumn{1}{c}{PIX} & 
\multicolumn{1}{c}{SEM} \\ 
\midrule
\midrule
Ensemble \textbf{(*)}& {42.85} & {69.54} & \textbf{79.86} &\textbf{71.28} & {87.59} & {90.11} & \textbf{{91.88}} & \best{79.41} \\ %
Ratio = 0.1 & 41.18 & 68.20 & 69.07 & 69.73 & 84.21 & 85.84 & 89.37 & 72.02 \\ %
Ratio = 0.2 & 41.56 & \textbf{69.73} & 73.64 & 68.14 & 86.36 & 88.83 & 91.13 & 76.04 \\ %
Ratio =  0.3 & {42.71} & 68.19 &  {77.01} & 68.40 & 87.53 & 90.29 & 91.24 & 77.33 \\ %
Ratio = 0.4 & 41.86 & 68.41 & 73.33 & 70.11 & 87.43 & \best{90.69} & 91.79 & 78.69 \\ %
Ratio = 0.5 & 41.16 & 68.26 & 72.69 & 70.74 & \best{88.11} & 90.51 & 89.83 & {78.94} \\ %
Random & \textbf{43.57} & 60.83 & 67.19 & 65.26 & 86.37 & 89.91 & 88.91 & 74.98 \\ %

\bottomrule
\end{tabular}
}
\end{table}
\begin{table}[h!]
\centering
\caption{Ablation of different ways of forming the many-shot classifier. We compare linear probing (Linear), using support data to form prototypes and assigning queries based on Euclidean od cosine distance (Proto eucl/cos), matching individual support representations as nearest neighbors based on Euclidean od cosine distance (nn eucl /cos), and fine-tuning the whole encoder along with the classifier (fine-tuning). 
}
\label{tab:ablation_classifier_5shot}
\setlength{\tabcolsep}{1.5pt}
\resizebox{\linewidth}{!}{

\begin{tabular}{l c c c c c c c c c}
\toprule
\textbf{Mode} & 
\multicolumn{1}{c}{CMC} & 
\multicolumn{1}{c}{DIA} & 
\multicolumn{1}{c}{DNA} & 
\multicolumn{1}{c}{INC} & 
\multicolumn{1}{c}{KAR} & 
\multicolumn{1}{c}{OPT} & 
\multicolumn{1}{c}{PIX} & 
\multicolumn{1}{c}{SEM} \\
\midrule
\midrule
Linear \textbf{(*)} & \textbf{42.85} & {69.54} & \textbf{79.86} & 71.28 & {87.59} & {90.11} & {91.88} & {79.41} \\ %
Proto (eucl) & 39.78 & 67.27 & 75.85 & 70.45 & 86.06 &  86.76 &  92.33 & 75.16 \\ %
Proto (cos) & 39.35 & 67.66 & {78.49} &  70.30 & 87.36 &  89.61 &  \best{92.71} &  75.50 \\ %
nn (eucl) & 40.31 & 66.57 & 72.68 &  69.22 & 86.26 &  89.74 &  92.37 & 76.41 \\ %
nn (cos) & 40.83 & 67.34 & 74.53 &  70.14 & \best{87.74} &  \best{91.09} &  92.55 &  77.74 \\ %
fine-tuning &  {42.52} & \textbf{70.31} & 76.30 & \best{71.63} & 85.27 & 89.09 &  91.71 &  \best{79.48} \\ %
\bottomrule
\end{tabular}
}
\end{table}

\subsection{Detailed analysis of \our{} results}
\label{sec:analysis_appendix}

We present detailed results for all datasets of the analysis performed in \Cref{sec:analysis}. 
In~\Cref{fig:knn_analysis_appendix} we measure the average fraction of neighbors sharing the same class label as a given data sample, when inspecting exactly $k$ nearest neighbors. The immediate nearest neighbor is the one with the highest probability of sampling a same-class data point in the majority of datasets, and in the remaining ones, this probability does not vary significantly when increasing the nearest-neighbor pool, which means that the immediate nearest neighbor remains a viable candidate.
In~\Cref{fig:10nn_consistency_appendix}, we compare the  label consistency between the baseline and the \our{} representation space.
Notably, \our{} representation increases the semantic consistency of nearest neighbors especially for data samples whose semantic consistency is low in the data space, while maintaining high consistency in samples who are already highly semantically consistent with their neighbors.

\begin{figure*}[h]
    \centering
    \begin{subfigure}{0.3\textwidth}
        \centering
        \includegraphics[width=\linewidth]{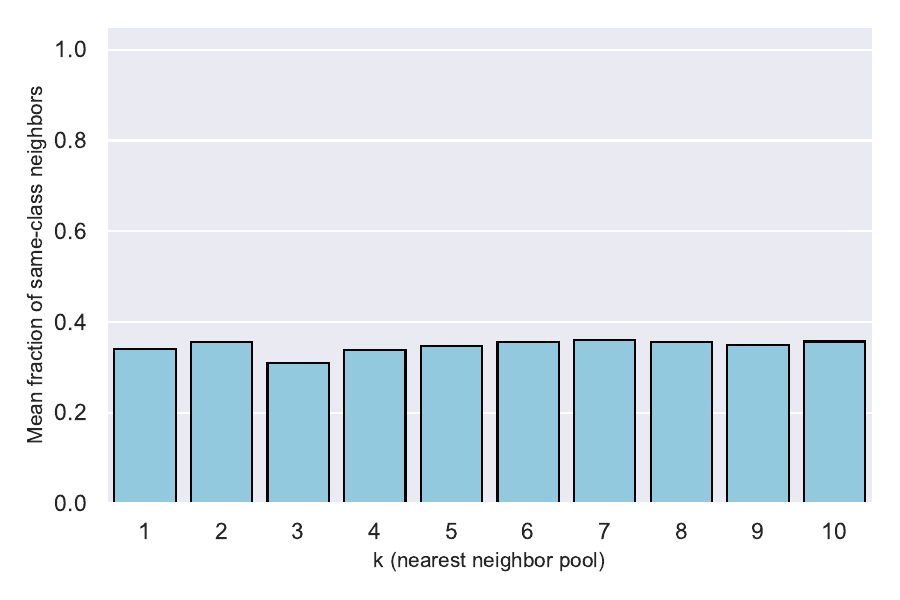}
        \caption{CMC}
    \end{subfigure}
    \hfill
    \begin{subfigure}{0.3\textwidth}
        \centering
        \includegraphics[width=\linewidth]{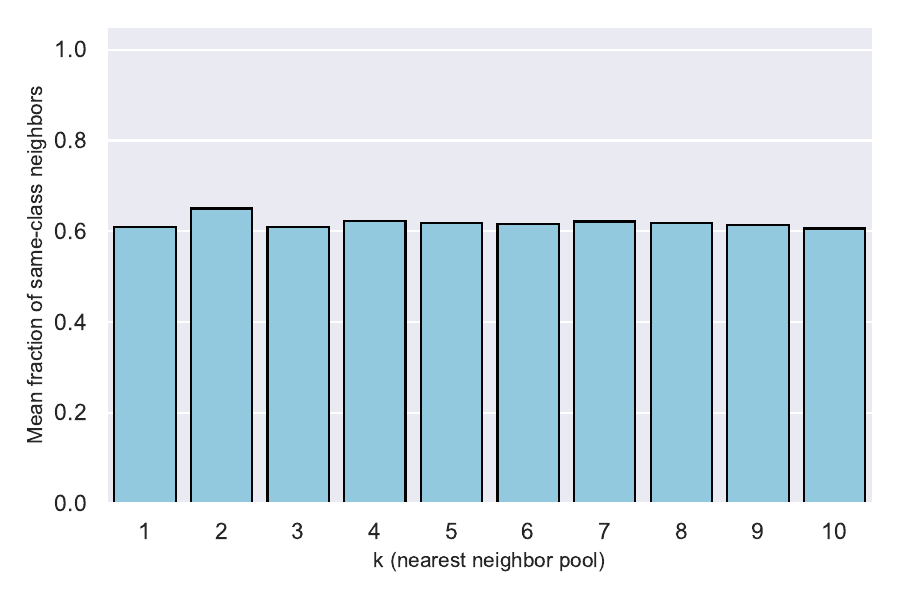}
        \caption{DIA}
    \end{subfigure}
    \hfill
    \begin{subfigure}{0.3\textwidth}
        \centering
        \includegraphics[width=\linewidth]{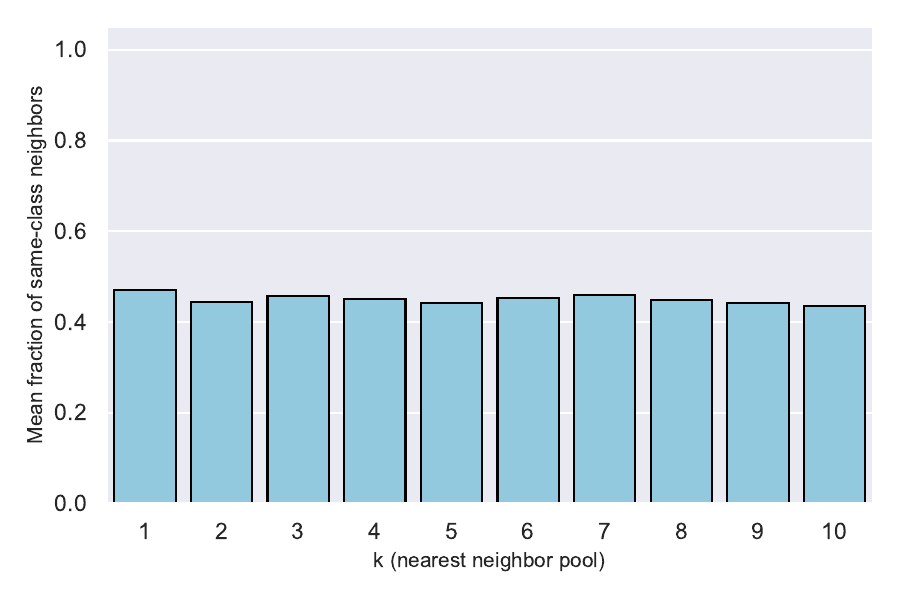}
        \caption{DNA}
    \end{subfigure}

    \vspace{0.5em}

    \begin{subfigure}{0.3\textwidth}
        \centering
        \includegraphics[width=\linewidth]{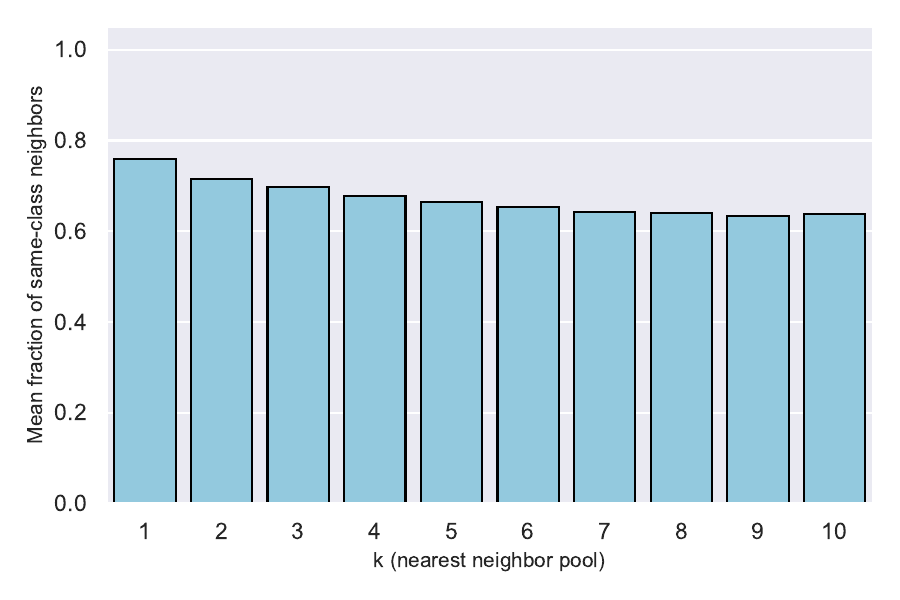}
        \caption{INC}
    \end{subfigure}
    \hfill
    \begin{subfigure}{0.3\textwidth}
        \centering
        \includegraphics[width=\linewidth]{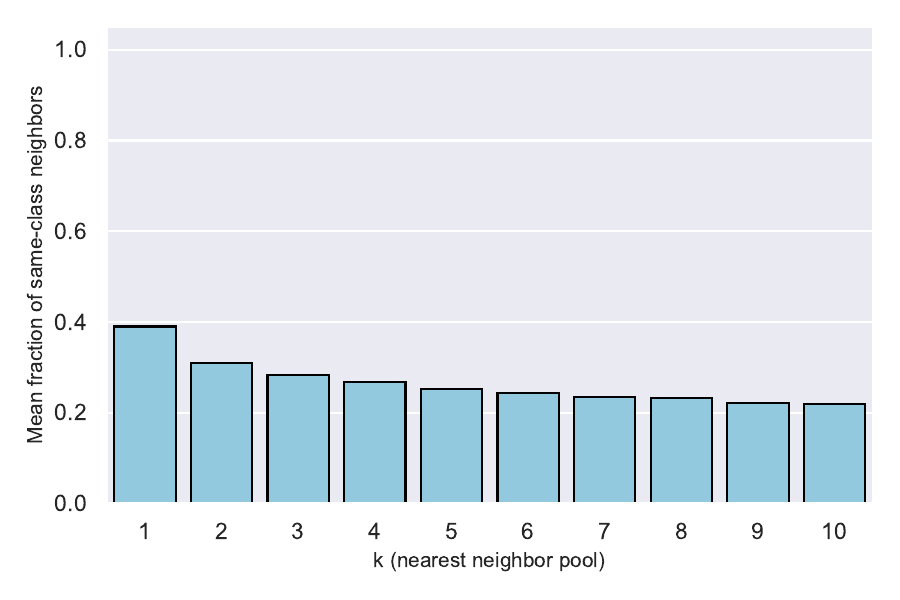}
        \caption{KAR}
    \end{subfigure}
    \hfill
    \begin{subfigure}{0.3\textwidth}
        \centering
        \includegraphics[width=\linewidth]{new_fig/exp9/optdigits_mean_fraction_vs_k.pdf}
        \caption{OPT}
    \end{subfigure}

    \vspace{0.5em}

    \begin{subfigure}{0.3\textwidth}
        \centering
        \includegraphics[width=\linewidth]{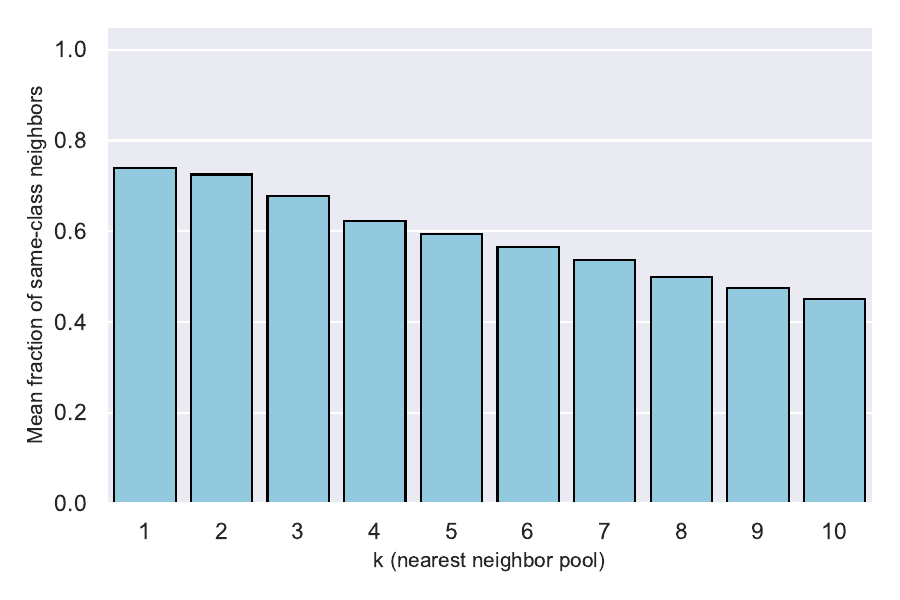}
        \caption{PIX}
    \end{subfigure}
    \hfil
    \begin{subfigure}{0.3\textwidth}
        \centering
        \includegraphics[width=\linewidth]{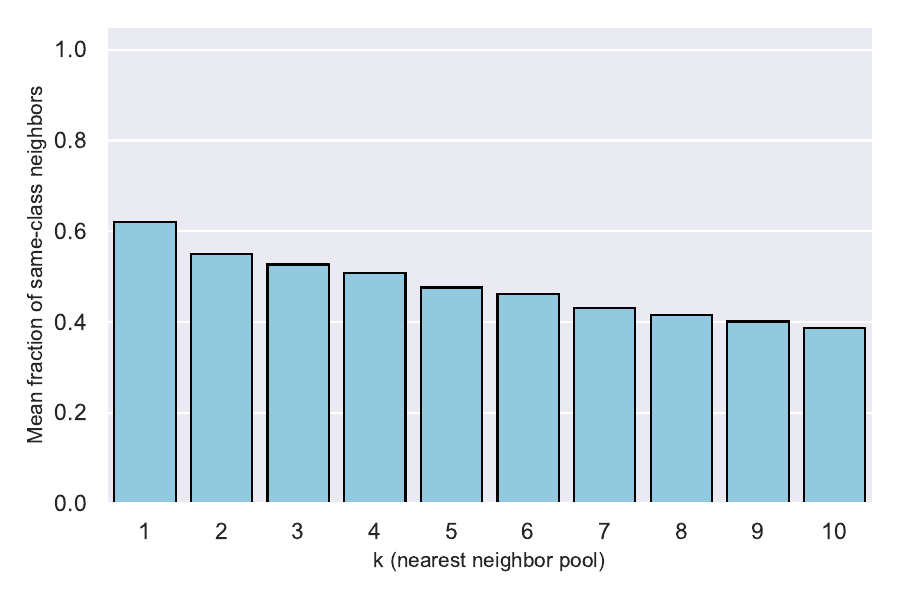}
        \caption{SEM}
    \end{subfigure}

    \caption{Average fraction of nearest neighbors sharing the same class label as a given data sample, when inspecting $k$ nearest neighbors.}
        \label{fig:knn_analysis_appendix}
\end{figure*}

\begin{figure*}[h]
    \centering
    \begin{subfigure}{0.3\textwidth}
        \centering
        \includegraphics[width=\linewidth]{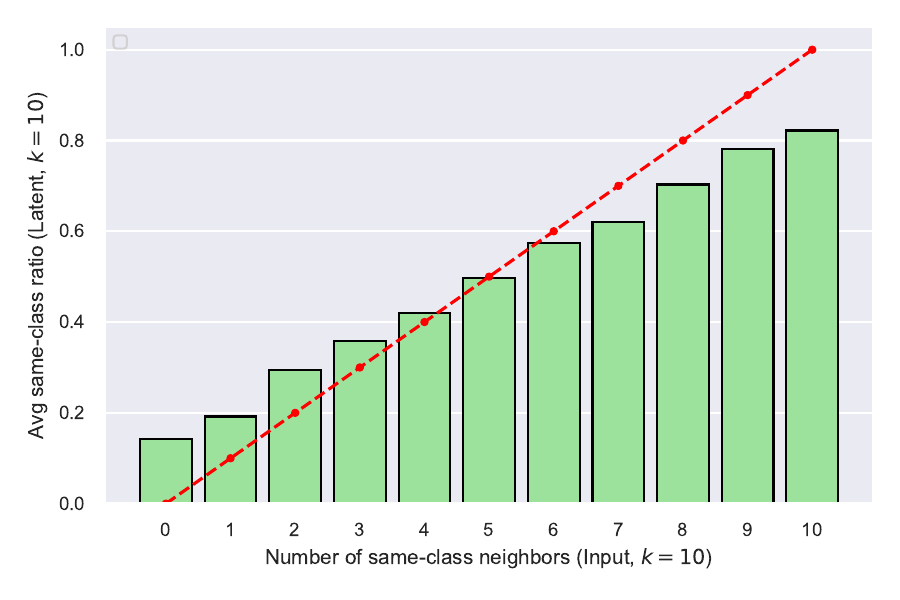}
        \caption{CMC}
    \end{subfigure}
    \hfill
    \begin{subfigure}{0.3\textwidth}
        \centering
        \includegraphics[width=\linewidth]{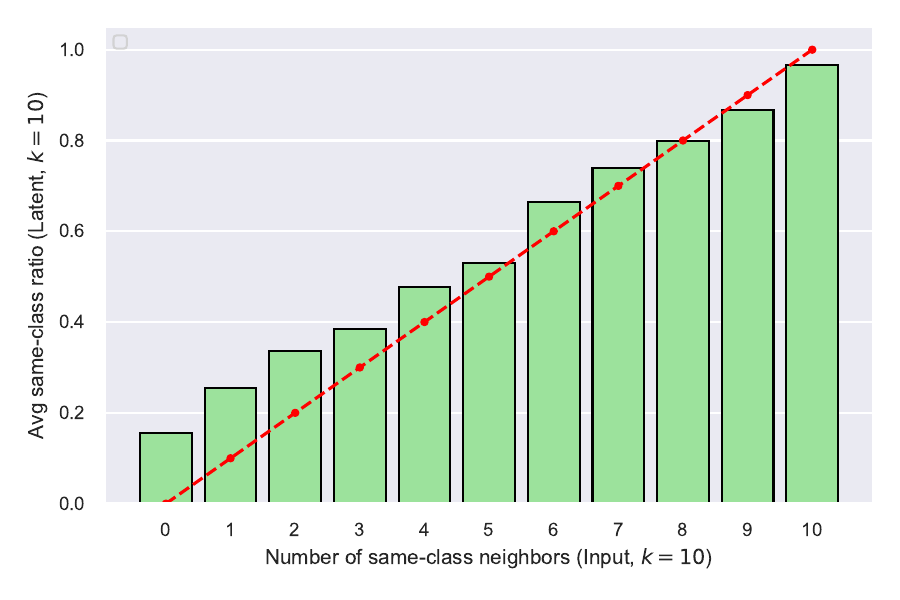}
        \caption{DIA}
    \end{subfigure}
    \hfill
    \begin{subfigure}{0.3\textwidth}
        \centering
        \includegraphics[width=\linewidth]{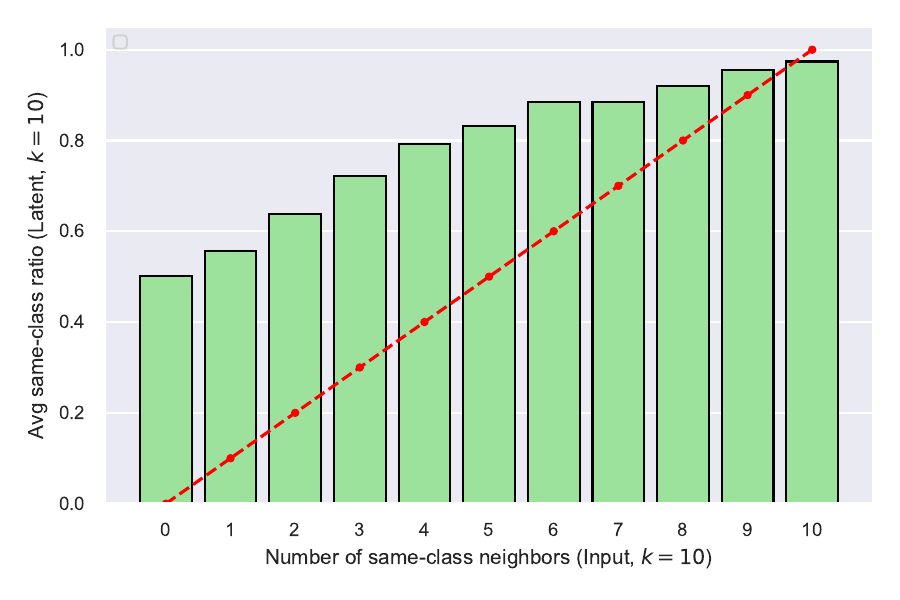}
        \caption{DNA}
    \end{subfigure}

    \vspace{0.5em}

    \begin{subfigure}{0.3\textwidth}
        \centering
        \includegraphics[width=\linewidth]{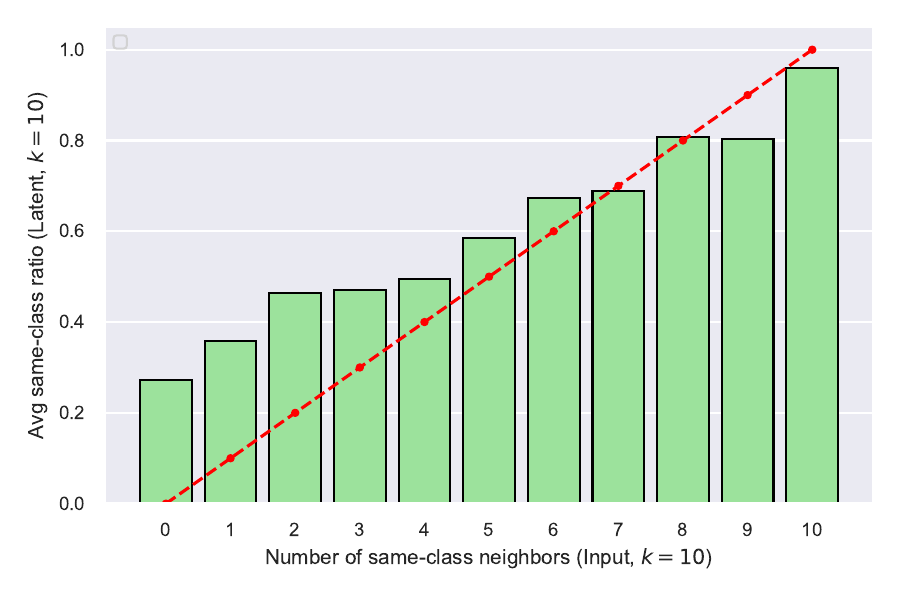}
        \caption{INC}
    \end{subfigure}
    \hfill
    \begin{subfigure}{0.3\textwidth}
        \centering
        \includegraphics[width=\linewidth]{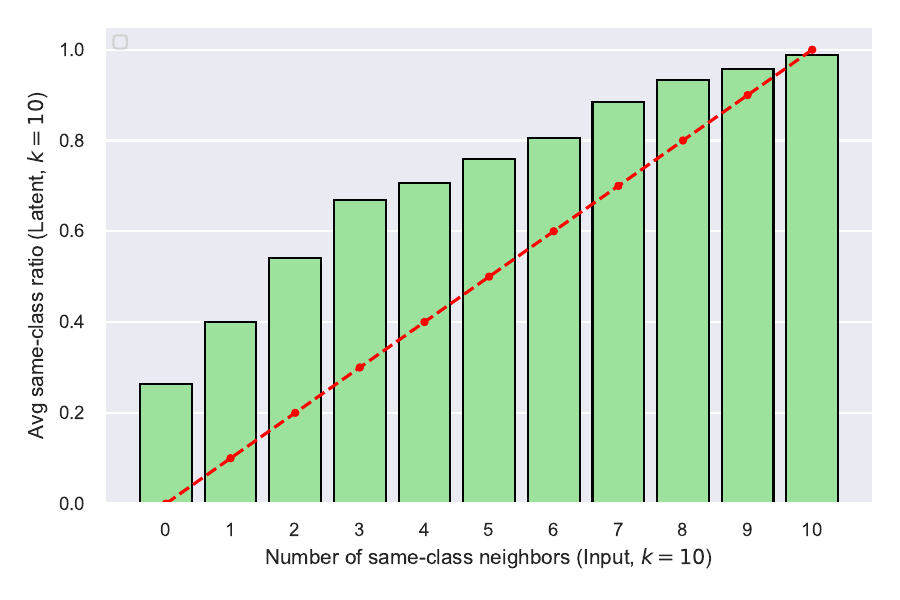}
        \caption{KAR}
    \end{subfigure}
    \hfill
    \begin{subfigure}{0.3\textwidth}
        \centering
        \includegraphics[width=\linewidth]{new_fig/exp8/optdigits_neighborhood_alignment.pdf}
        \caption{OPT}
    \end{subfigure}

    \vspace{0.5em}

    \begin{subfigure}{0.3\textwidth}
        \centering
        \includegraphics[width=\linewidth]{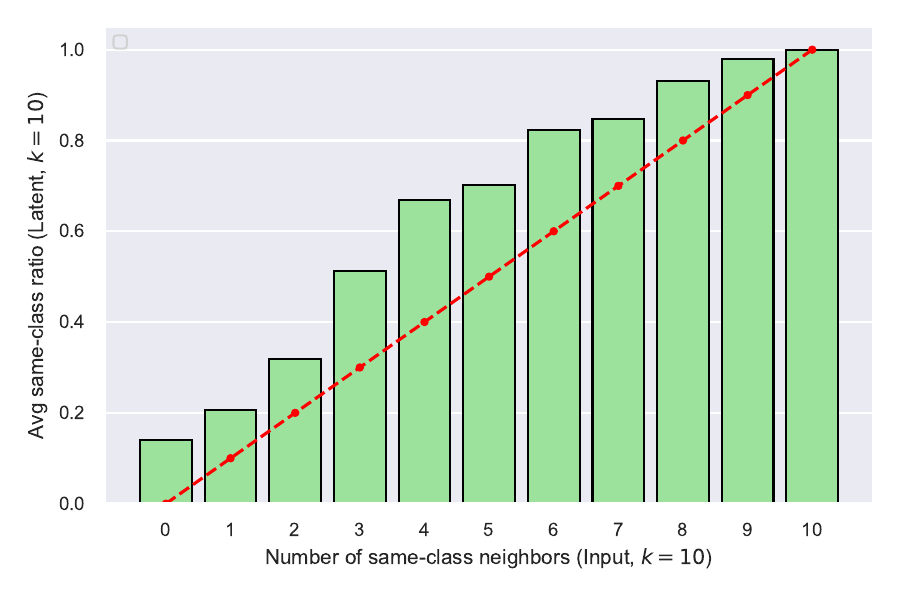}
        \caption{PIX}
    \end{subfigure}
    \hfil
    \begin{subfigure}{0.3\textwidth}
        \centering
        \includegraphics[width=\linewidth]{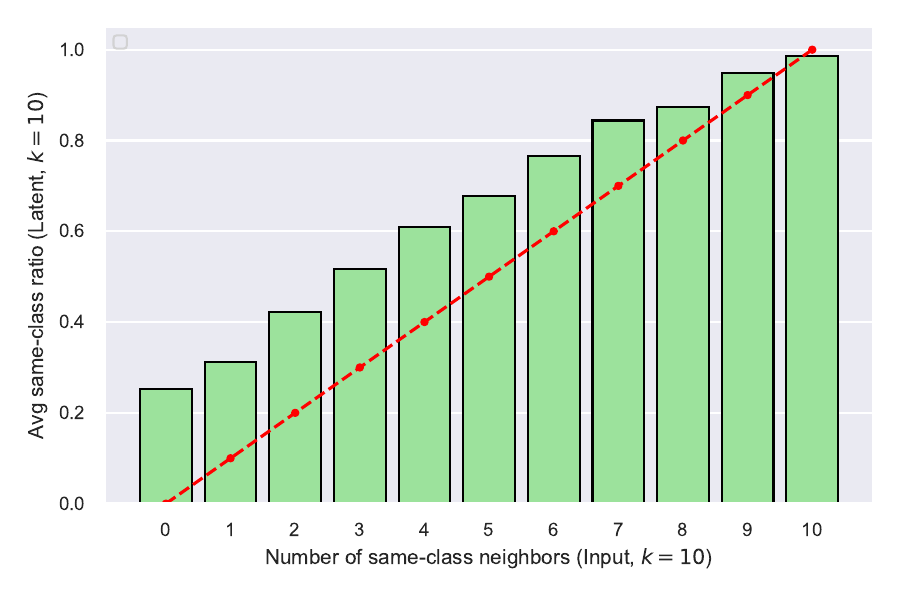}
        \caption{SEM}
    \end{subfigure}

    \caption{Comparison of label consistency between the baseline and the \our{} representation space. The Y-axis represents the number of 10-nearest neighbors sharing the same class label for each data point, grouped by 10-nearest neighbors in input space.}
        \label{fig:10nn_consistency_appendix}
\end{figure*}

\newcommand{\std}[1]{{\scriptsize $\pm #1$}}
\newcolumntype{C}{S[table-format=3.3, table-align-text-center]}

\begin{table*}
\centering
\caption{
Evaluation in terms of 1-, 5-, and 10-shot classification accuracy with standard deviations for 3 best-performing methods: STUNT~\cite{nam2023stunt} D2R2~\cite{liu2024d2r2}, and \our{}.
We put the best method in bold if its performance exceeds the second-best by more than one standard deviation of its result. This occurs in 18 out of 24 instances, with \our{} achieving the top result in 14 of those cases.
}

\label{tab:fewshot_main_results_with_std}
\setlength{\tabcolsep}{2pt}
\resizebox{\linewidth}{!}{
\begin{tabular}{l 
c c c c c c c c
}
\toprule
\textbf{Method} &
\multicolumn{1}{c}{CMC} &
\multicolumn{1}{c}{DIA} &
\multicolumn{1}{c}{DNA} &
\multicolumn{1}{c}{INC} &
\multicolumn{1}{c}{KAR} &
\multicolumn{1}{c}{OPT} &
\multicolumn{1}{c}{PIX} &
\multicolumn{1}{c}{SEM} \\
\midrule
\multicolumn{9}{c}{\textbf{\#shot=1}} \\
\midrule
STUNT & 37.10 \std{2.98} & 61.08 \std{4.47} & 66.20 \std{6.26} & 63.52 \std{4.75} & 71.20 \std{3.32} & 76.94 \std{2.28} & 79.05  \std{3.63} & 55.91 \std{3.65} \\ %
D2R2-c & \best{40.81} \std{0.41} & 60.10 \std{2.11} & 61.29 \std{2.16} & \best{72.85} \std{2.03} & 61.45 \std{1.78} & 77.41 \std{1.56} & 61.45 \std{1.15} & 34.26 \std{1.63} \\ %
\our{} (our) & 36.76 \std{0.72} & {61.14 \std{1.15}} & {66.79 \std{2.62}} & 62.89 \std{1.87} & \best{76.40 \std{1.01}} & \best{78.94 \std{0.88}} & \best{83.06 \std{1.20}} & \best{61.11 \std{1.07}} \\ %
\midrule
\multicolumn{9}{c}{\textbf{\#shot=5}} \\
\midrule
STUNT & 40.40 \std{3.55} & 69.88 \std{7.87} & 79.18 \std{6.54} & 72.69 \std{4.31} & 85.45 \std{2.11} & 88.42 \std{1.97} & 89.08 \std{2.06} & 71.54 \std{3.00} \\ %
D2R2-c & \best{43.39 \std{0.37}} & 68.69 \std{1.63} & \best{81.39 \std{1.38}} & {73.34 \std{1.91}} & 79.49 \std{0.95} & 87.12 \std{0.91} & 82.22 \std{1.93} & 60.16 \std{1.42} \\ %
\our{} (our) & 42.85 \std{0.62} & {69.54 \std{0.71}} & 79.86 \std{2.45} & 71.28 \std{0.65} & \best{87.59 \std{0.47} } & \best{90.11 \std{0.45} } & \best{91.88 \std{0.88}} & \best{79.41 \std{0.77} } \\ %
\midrule
\multicolumn{9}{c}{\textbf{\#shot=10}} \\
\midrule
STUNT & 42.01 \std{5.18} & 72.82 \std{4.17} & 80.96 \std{4.92} & 74.08 \std{3.33} & 86.95 \std{2.49} & 89.91 \std{1.40} & 89.98 \std{2.44} & 74.74 \std{3.08} \\ %
D2R2-c & 38.41 \std {0.58}  & 71.94 \std {1.61}  & 81.46 \std {1.89} & {74.84} \std {1.97}  & 86.19 \std {0.75}  & 91.84 \std {0.74}  & 84.93 \std {1.17} & 69.56 \std {1.28} \\ %
\our{} (our) & \best{46.30 \std{0.62} } & \best{73.61} \std{0.43} & {83.59 \std{2.21}} & 72.68  \std{0.52} & \best{90.88 \std{0.38}} & \best{92.62 \std{0.42}} & \best{93.88 \std{0.43}} & \best{84.11 \std{0.74}} \\ %
\bottomrule
\end{tabular}
}
\end{table*}

\end{document}